
\documentclass{article}

\usepackage{microtype}
\usepackage{graphicx}
\usepackage{booktabs} 
\usepackage{amsfonts}       
\usepackage{nicefrac}       
\usepackage{microtype}      
\usepackage{mathtools}
\usepackage{amsthm}
\usepackage{amssymb}
\usepackage{amsmath}
\usepackage{amsfonts}
\usepackage{booktabs}
\usepackage{siunitx}
\usepackage{arydshln}
\usepackage{subcaption}
\usepackage{physics}
\usepackage{soul}

\usepackage{epstopdf}

\usepackage{wrapfig}
\usepackage{tabularx}
\usepackage{floatrow}
\usepackage{arydshln}
\usepackage{array}

\usepackage{pifont}
\newcommand{\cmark}{\ding{51}}%
\newcommand{\xmark}{\ding{55}}%

\newcolumntype{L}[1]{>{\raggedright\arraybackslash}m{#1}} 
\newcolumntype{C}[1]{>{\centering\arraybackslash}m{#1}} 
\newcolumntype{R}[1]{>{\raggedleft\arraybackslash}m{#1}} 
\newcolumntype{N}{@{}m{0pt}@{}}

\newtheorem{theorem}{Theorem}
\newtheorem{lemma}[theorem]{Lemma}


\newtheorem{remark}[theorem]{Remark}

\newcommand{\Lip}{\mathrm{Lip}}

\newcommand{\ricky}[1]{\textcolor{red}{((RC: #1))}}

\usepackage{hyperref}
\usepackage{xcolor}



\usepackage[accepted]{icml2019}

\icmltitlerunning{Invertible Residual Networks}
\begin{document}

\twocolumn[
\icmltitle{Invertible Residual Networks}



\icmlsetsymbol{equal}{*}

\begin{icmlauthorlist}
\icmlauthor{Jens Behrmann}{equal,br,to}
\icmlauthor{Will Grathwohl}{equal,to}
\icmlauthor{Ricky T. Q. Chen}{to}
\icmlauthor{David Duvenaud}{to}
\icmlauthor{J\"orn-Henrik Jacobsen}{equal,to}
\end{icmlauthorlist}

\icmlaffiliation{br}{University of Bremen, Center for Industrial Mathematics}
\icmlaffiliation{to}{Vector Institute and University of Toronto}

\icmlcorrespondingauthor{Jens Behrmann}{jensb@uni-bremen.de}
\icmlcorrespondingauthor{J\"orn-Henrik Jacobsen}{j.jacobsen@vectorinstitute.ai}

\icmlkeywords{Machine Learning, ICML}

\vskip 0.3in
]



\printAffiliationsAndNotice{\icmlEqualContribution} 

\begin{abstract}

We show that standard ResNet architectures can be made invertible, allowing the same model to be used for classification, density estimation, and generation. Typically, enforcing invertibility requires partitioning dimensions or restricting network architectures. In contrast, our approach only requires adding a simple normalization step during training, already available in standard frameworks. Invertible ResNets define a generative model which can be trained by maximum likelihood on unlabeled data. To compute likelihoods, we introduce a tractable approximation to the Jacobian log-determinant of a residual block. Our empirical evaluation shows that invertible ResNets perform competitively with both state-of-the-art image classifiers and flow-based generative models, something that has not been previously achieved with a single architecture.
\end{abstract}

\section{Introduction}

One of the main appeals of neural network-based models is that a single model architecture can often be used to solve a variety of related tasks. However, many recent advances are based on special-purpose solutions tailored to particular domains. State-of-the-art architectures in unsupervised learning, for instance, are becoming increasingly domain-specific~\cite{oord2016pixel,kingma2018glow,DBLP:conf/icml/ParmarVUKSKT18,karras2018style,van2016wavenet}.
On the other hand, one of the most successful feed-forward architectures for discriminative learning are deep residual networks~\cite{he2016deep,zagoruyko2016wide}, which differ considerably from their generative counterparts.
This divide makes it complicated to choose or design a suitable architecture for a given task. It also makes it hard for discriminative tasks to benefit from unsupervised learning.
We bridge this gap with a new class of architectures that perform well in both domains.



\begin{figure}
    \centering
    {{\includegraphics[width=.95\linewidth]{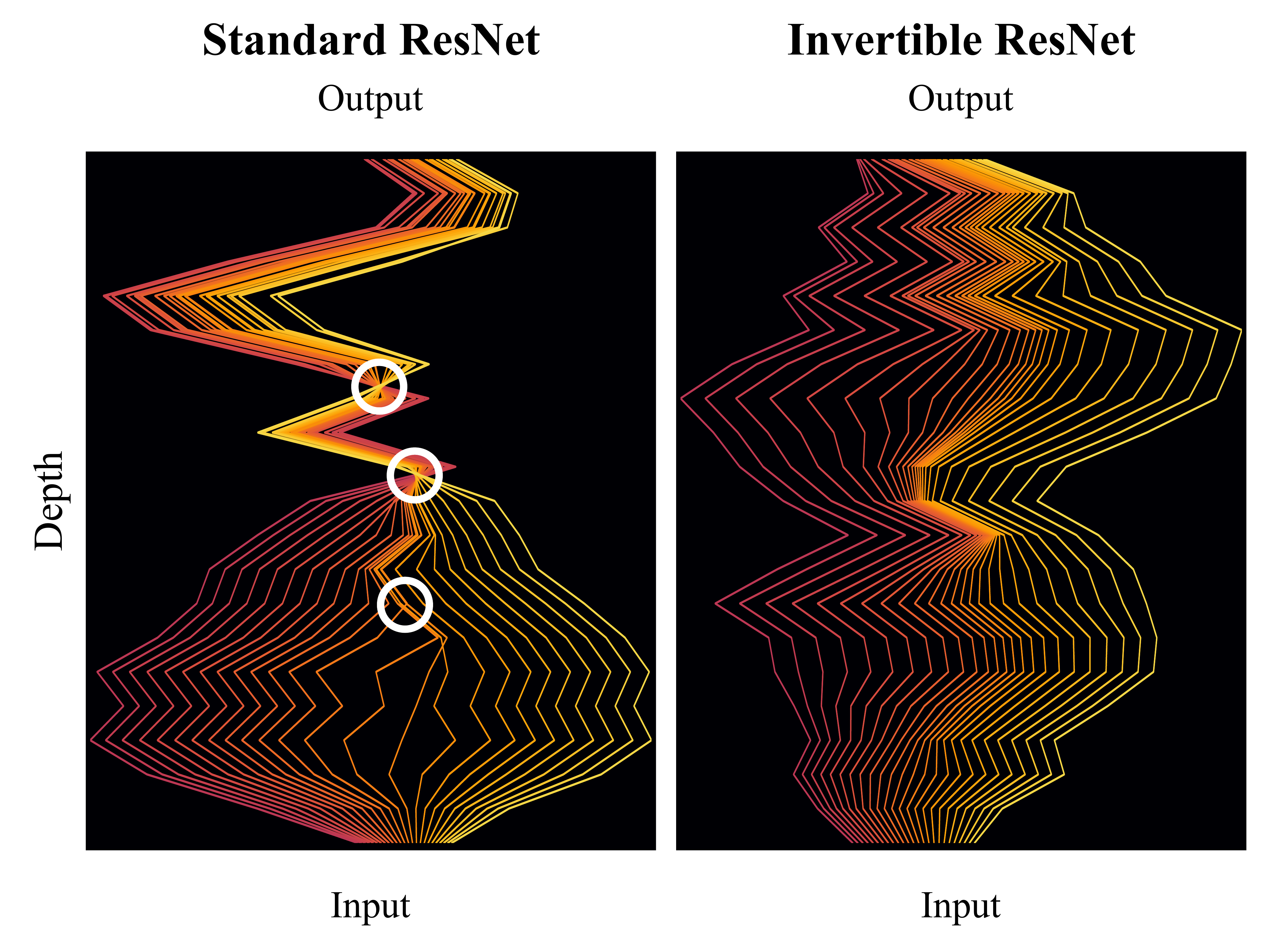} }}
    \label{fig:imgen_cropped}
    \vspace{-5mm}
     \caption{Dynamics of a standard residual network (left) and invertible residual network (right). Both networks map the interval $[-2, 2]$ to: 1) noisy $x^3$-function at half depth and 2) noisy identity function at full depth. Invertible ResNets describe a bijective continuous dynamics while regular ResNets result in crossing and collapsing paths (circled in white) which correspond to non-bijective continuous dynamics. Due to collapsing paths, standard ResNets are not a valid density model.}
     \label{fig:crossingPaths}
\end{figure}

To achieve this, we focus on reversible networks which have been shown to produce competitive performance on discriminative~\cite{gomez2017reversible,jacobsen2018irevnet} and generative~\cite{dinh2014nice,dinh2016density,kingma2018glow} tasks independently, albeit in the same model paradigm. They typically rely on fixed dimension splitting heuristics, but common splittings interleaved with non-volume conserving elements are constraining and their choice has a significant impact on performance~\cite{kingma2018glow,dinh2016density}. This makes building reversible networks a difficult task. In this work we show that these exotic designs, necessary for competitive density estimation performance, can severely hurt discriminative performance. 

To overcome this problem, we leverage the viewpoint of ResNets as an Euler discretization of ODEs \citep{haberRuthotto, ruthottoHaber, luODE, naisnet} and prove that invertible ResNets (i-ResNets) can be constructed by simply changing the normalization scheme of standard ResNets. 

As an intuition, Figure \ref{fig:crossingPaths} visualizes the differences in the dynamics learned by standard and invertible ResNets. 

This approach allows unconstrained architectures for each residual block, while only requiring a Lipschitz constant smaller than one for each block. We demonstrate that this restriction negligibly impacts performance when building image classifiers - they perform on par with their non-invertible counterparts on classifying MNIST, CIFAR10 and CIFAR100 images. 

We then show how i-ResNets can be trained as maximum likelihood generative models on unlabeled data. To compute likelihoods, we introduce a tractable approximation to the Jacobian determinant of a residual block. Like FFJORD~\cite{ffjord}, i-ResNet flows have unconstrained (free-form) Jacobians, allowing them to learn more expressive transformations than the triangular mappings used in other reversible models. Our empirical evaluation shows that i-ResNets perform competitively with both state-of-the-art image classifiers and flow-based generative models, bringing general-purpose architectures one step closer to reality.\footnote{Official code release: \url{https://github.com/jhjacobsen/invertible-resnet}}

\section{Enforcing Invertibility in ResNets}
There is a remarkable similarity between ResNet architectures and Euler's method for ODE initial value problems:
\begin{align*}
    x_{t+1} & \leftarrow x_t + g_{\theta_t}(x_t) \\
    x_{t+1} & \leftarrow x_t + h f_{\theta_t}(x_t)
\end{align*}
where $x_t \in \mathbb{R}^d$ represent activations or states, $t$ represents layer indices or time, $h>0$ is a step size, and $g_{\theta_t}$ is a residual block.
This connection has attracted research at the intersection of deep learning and dynamical systems \citep{luODE, haberRuthotto, ruthottoHaber, chen2018neural}.
However, little attention has been paid to the dynamics backwards in time 
\begin{align*}
    x_t & \leftarrow x_{t+1} - g_{\theta_t}(x_t) \\
    x_t & \leftarrow x_{t+1} - h f_{\theta_t}(x_t)
\end{align*}
which amounts to the implicit backward Euler discretization. In particular, solving the dynamics backwards in time would implement an inverse of the corresponding ResNet.
The following theorem states that a simple condition suffices to make the dynamics solvable and thus renders the ResNet invertible:
\begin{theorem}[Sufficient condition for invertible ResNets]
\label{thm:invertibility}
Let $F_\theta: \mathbb{R}^d \rightarrow \mathbb{R}^d$ with $F_\theta = (F^1_{\theta} \circ \ldots \circ F^T_{\theta})$ denote a ResNet with blocks $F^t_{\theta} = I + g_{\theta_t}$. Then, the ResNet $F_\theta$ is invertible if
\begin{align*}
    \Lip(g_{\theta_t}) < 1, \text{ for all } t=1, \ldots, T ,
\end{align*}
where $\Lip(g_{\theta_t})$ is the Lipschitz-constant of $g_{\theta_t}$.
\end{theorem}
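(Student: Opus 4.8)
The plan is to reduce the problem to a single residual block and then invoke the contraction mapping principle. Since $F_\theta = F^1_\theta \circ \cdots \circ F^T_\theta$ and a composition of bijections is a bijection, it suffices to show that each block $F^t_\theta = I + g_{\theta_t}$ is invertible under the hypothesis $\Lip(g_{\theta_t}) < 1$. I would therefore fix a single block, suppress the index, and work with $F = I + g$ where $L := \Lip(g) < 1$.

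To invert a single block I would solve, for an arbitrary target $y \in \mathbb{R}^d$, the equation $y = x + g(x)$ for $x$. Rewriting this as the fixed-point problem $x = y - g(x)$ suggests defining the map $T_y : \mathbb{R}^d \to \mathbb{R}^d$ by $T_y(x) = y - g(x)$. The key computation is that $T_y$ is a contraction: for any $x_1, x_2$,
\[
\|T_y(x_1) - T_y(x_2)\| = \|g(x_1) - g(x_2)\| \le L\,\|x_1 - x_2\|,
\]
and $L < 1$. Since $(\mathbb{R}^d, \|\cdot\|)$ is a complete metric space, the Banach fixed-point theorem yields a unique fixed point $x^\star$ of $T_y$, i.e.\ a unique $x^\star$ with $x^\star = y - g(x^\star)$, equivalently $F(x^\star) = y$. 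Uniqueness of the fixed point gives injectivity of $F$, and existence of the fixed point for every $y$ gives surjectivity, so $F$ is a bijection.

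Finally I would reassemble the blocks: each $F^t_\theta$ is a bijection of $\mathbb{R}^d$ by the previous step, hence $F_\theta$ is a composition of bijections and is itself invertible, with $F_\theta^{-1} = (F^T_\theta)^{-1} \circ \cdots \circ (F^1_\theta)^{-1}$.

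I do not expect a genuine obstacle here—the statement is essentially a repackaging of the contraction mapping principle. The one point that requires a little care is that the contraction estimate must hold \emph{globally} on all of $\mathbb{R}^d$, not merely on a bounded region; this is immediate from the global Lipschitz hypothesis, and completeness of the ambient space is precisely what licenses the fixed-point theorem. A useful side benefit worth recording is that the Banach theorem is constructive: the iteration $x_{k+1} = y - g(x_k)$ converges to the inverse at a geometric rate governed by $L$, which presumably underlies the inversion procedure used elsewhere in the paper.
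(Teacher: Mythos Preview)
Your proposal is correct and matches the paper's own proof essentially line for line: reduce to a single block, rewrite $y = x + g(x)$ as the fixed-point equation $x = y - g(x)$, and apply the Banach fixed-point theorem on the Banach space $\mathbb{R}^d$ using $\Lip(g_{\theta_t}) < 1$. Your write-up is in fact slightly more explicit than the paper's in spelling out why existence and uniqueness of the fixed point give surjectivity and injectivity, and in noting the completeness hypothesis; the constructive iteration you mention is exactly the paper's Algorithm~\ref{alg:inverse}.
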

Note that this condition is not necessary for invertibility.
Other approaches \citep{dinh2014nice, dinh2016density, jacobsen2018irevnet, Chang2018ReversibleAF, kingma2018glow} rely on partitioning dimensions or autoregressive structures to create analytical inverses. 

While enforcing $\Lip(g) < 1$ makes the ResNet invertible, we have no analytic form of this inverse. However, we can obtain it through a simple fixed-point iteration, see Algorithm \ref{alg:inverse}. Note, that the starting value for the fixed-point iteration can be any vector, because the fixed-point is unique. However, using the output $y = x + g(x)$ as the initialization $x^0 := y$  is a good starting point since $y$ was obtained from $x$ only via a bounded perturbation of the identity. From the Banach fixed-point theorem we have
\begin{align}
\label{eq:fixedpoint}
    \|x - x^n\|_2 \leq \frac{\Lip(g)^n}{1-\Lip(g)} \|x^1 - x^0\|_2.
\end{align}
Thus, the convergence rate is exponential in the number of iterations $n$ and smaller Lipschitz constants will yield faster convergence.
\begin{algorithm}[t]
   \caption{Inverse of i-ResNet layer via fixed-point iteration.}
   \label{alg:inverse}
\begin{algorithmic}
   \STATE {\bfseries Input:} output from residual layer $y$, contractive residual block $g$, number of fixed-point iterations $n$
   \STATE Init: $x^0 := y$
   \FOR{$i=0, \ldots, n$}
   \STATE $x^{i+1} := y - g(x^i)$
   \ENDFOR
\end{algorithmic}
\end{algorithm}

Additional to invertibility, a contractive residual block also renders the residual layer bi-Lipschitz. 
\begin{lemma}[Lipschitz constants of Forward and Inverse]
\label{thm:lipForwardInverse}
Let $F(x) = x + g(x)$ with $\Lip(g) = L < 1$ denote the residual layer. Then, it holds 
\begin{align*}
    \Lip(F) \leq 1 + L \quad \text{ and } \quad \Lip(F^{-1}) \leq \frac{1}{1-L}.
\end{align*}
\end{lemma}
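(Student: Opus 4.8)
The plan is to establish the two bounds separately, each via a single application of the triangle inequality together with the hypothesis $\Lip(g) = L$. For the forward bound, I would fix arbitrary points $x, y \in \mathbb{R}^d$ and use the decomposition $F(x) - F(y) = (x - y) + \bigl(g(x) - g(y)\bigr)$. The ordinary triangle inequality followed by the Lipschitz estimate $\|g(x) - g(y)\|_2 \leq L \|x - y\|_2$ yields $\|F(x) - F(y)\|_2 \leq \|x - y\|_2 + L\|x - y\|_2 = (1 + L)\|x - y\|_2$, which is exactly the first claim.

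For the inverse bound, I would first invoke Theorem~\ref{thm:invertibility} to guarantee that $F^{-1}$ exists (since $L < 1$), so that $\Lip(F^{-1})$ is well-defined on all of $\mathbb{R}^d$. The key step is to bound the same decomposition \emph{from below} using the reverse triangle inequality: $\|F(x) - F(y)\|_2 \geq \|x - y\|_2 - \|g(x) - g(y)\|_2 \geq (1 - L)\|x - y\|_2$. Writing $u = F(x)$ and $v = F(y)$, equivalently $x = F^{-1}(u)$ and $y = F^{-1}(v)$, this inequality rearranges to $\|F^{-1}(u) - F^{-1}(v)\|_2 \leq \frac{1}{1-L}\|u - v\|_2$, which is the second claim.

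I do not anticipate a genuine obstacle: both estimates reduce to line-by-line applications of the forward and reverse triangle inequalities. The only point that warrants care is that the lower bound $(1-L)\|x-y\|_2$ is strictly positive precisely because $L < 1$; this is what keeps the inverse Lipschitz constant finite (and, as a byproduct, re-confirms injectivity of $F$). I would also note that $u$ and $v$ range over all of $\mathbb{R}^d$ as $x$ and $y$ do, which is ensured by the surjectivity component of Theorem~\ref{thm:invertibility}, so the bound on $\Lip(F^{-1})$ holds globally rather than merely on the image of some restricted domain.
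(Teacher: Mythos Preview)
Your proposal is correct and follows essentially the same route as the paper: the forward bound via the ordinary triangle inequality (the paper just says it ``follows directly from the addition of Lipschitz constants''), and the inverse bound via the reverse triangle inequality to obtain $\|F(x)-F(y)\|_2 \geq (1-L)\|x-y\|_2$, followed by the substitution $x=F^{-1}(u)$, $y=F^{-1}(v)$ justified by surjectivity of $F$. Your explicit invocation of Theorem~\ref{thm:invertibility} and the remark on why $L<1$ keeps the constant finite are the only additions, and they are welcome clarifications rather than a different argument.
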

Hence by design, invertible ResNets offer stability guarantees for both their forward and inverse mapping. In the following section, we discuss approaches to enforce the Lipschitz condition.

\subsection{Satisfying the Lipschitz Constraint}

We implement residual blocks as a composition of contractive nonlinearities $\phi$ (e.g. ReLU, ELU, tanh) and linear mappings. 

For example, in our convolutional networks $g = W_3 \phi(W_2 \phi(W_1))$, where $W_i$ are convolutional layers. Hence,
\begin{align*}
    \Lip (g) < 1,\quad \text{if } \|W_i\|_2  <1,
\end{align*}
where $\|\cdot\|_2$ denotes the spectral norm. Note, that regularizing the spectral norm of the Jacobian of $g$ \citep{Sokolic} only reduces it locally and does not guarantee the above condition. Thus, we will enforce $\|W_i\|_2  <1$ for each layer.

A power-iteration on the parameter matrix as in \citet{miyato2018spectral} approximates only a bound on $\|W_i\|_2$ instead of the true spectral norm, if the filter kernel is larger than $1\cross1$, see \citet{tsuzuku2018lipschitz} for details on the bound. Hence, unlike \citet{miyato2018spectral}, we directly estimate the spectral norm of $W_i$ by performing power-iteration using $W_i$ and $W_i^T$ as proposed in \citet{gouk}. The power-iteration yields an under-estimate $\tilde{\sigma}_i \leq \|W_i\|_2$. Using this estimate, we normalize via
\begin{align}
\label{eq:spectralNorm}
    \tilde{W}_i = \begin{cases}  c\, W_i / \tilde{\sigma}_i, \quad &\text{if } c / \tilde{\sigma}_i <1 \\
    W_i, \quad &\text{else}
    \end{cases},
\end{align}
where the hyper-parameter $c<1$ is a scaling coefficient. Since $\tilde{\sigma}_i$ is an under-estimate, $\|W_i\|_2  \leq c$ is not guaranteed. However, after training \citet{singularValConv} offer an approach to inspect $\|W_i\|_2$ exactly using the SVD on the Fourier transformed parameter matrix, which will allow us to show $\Lip (g) < 1$ holds in all cases.

\section{Generative Modelling with i-ResNets}
\label{sec:maxLikelihood}

We can define a simple generative model for data $x \in \mathbb{R}^d$ by first sampling $z \sim p_z(z)$ where $z \in \mathbb{R}^d$ and then defining $x = \Phi(z)$ for some function $\Phi: \mathbb{R}^d \rightarrow \mathbb{R}^d$. If $\Phi$ is invertible and we define $F = \Phi^{-1}$, then we can compute the likelihood of any $x$ under this model using the change of variables formula
\begin{align}
\label{eq:CoV}
    \ln p_x(x) = \ln p_z(z) + \ln |\det J_F(x)|,
\end{align}
where $J_F(x)$ is the Jacobian of $F$ evaluated at x. Models of this form are known as Normalizing Flows~\citep{rezende2015variational}. They have recently become a popular model for high-dimensional data due to the introduction of powerful bijective function approximators whose Jacobian log-determinant can be efficienty  computed~\citep{dinh2014nice, dinh2016density, kingma2018glow, chen2018neural} or approximated~\cite{ffjord}.

Since i-ResNets are guaranteed to be invertible we can use them to parameterize $F$ in Equation \eqref{eq:CoV}. Samples from this model can be drawn by first sampling $z \sim p(z)$ and then computing $x = F^{-1}(z)$ with Algorithm \ref{alg:inverse}. In Figure \ref{fig:toy2d} we show an example of using an i-ResNet to define a generative model on some two-dimensional datasets compared to Glow~\citep{kingma2018glow}.


\subsection{Scaling to Higher Dimensions}

While the invertibility of i-ResNets allows us to use them to define a Normalizing Flow, we must compute $\ln | \det J_F(x)|$ to evaluate the data-density under the model. Computing this quantity has a time cost of  $\mathcal{O}(d^3)$ in general which makes na\"{i}vely scaling to high-dimensional data impossible.

\begin{figure}
    \centering
    \begin{subfigure}[b]{0.33\linewidth}
    \centering
    \includegraphics[width=\linewidth]{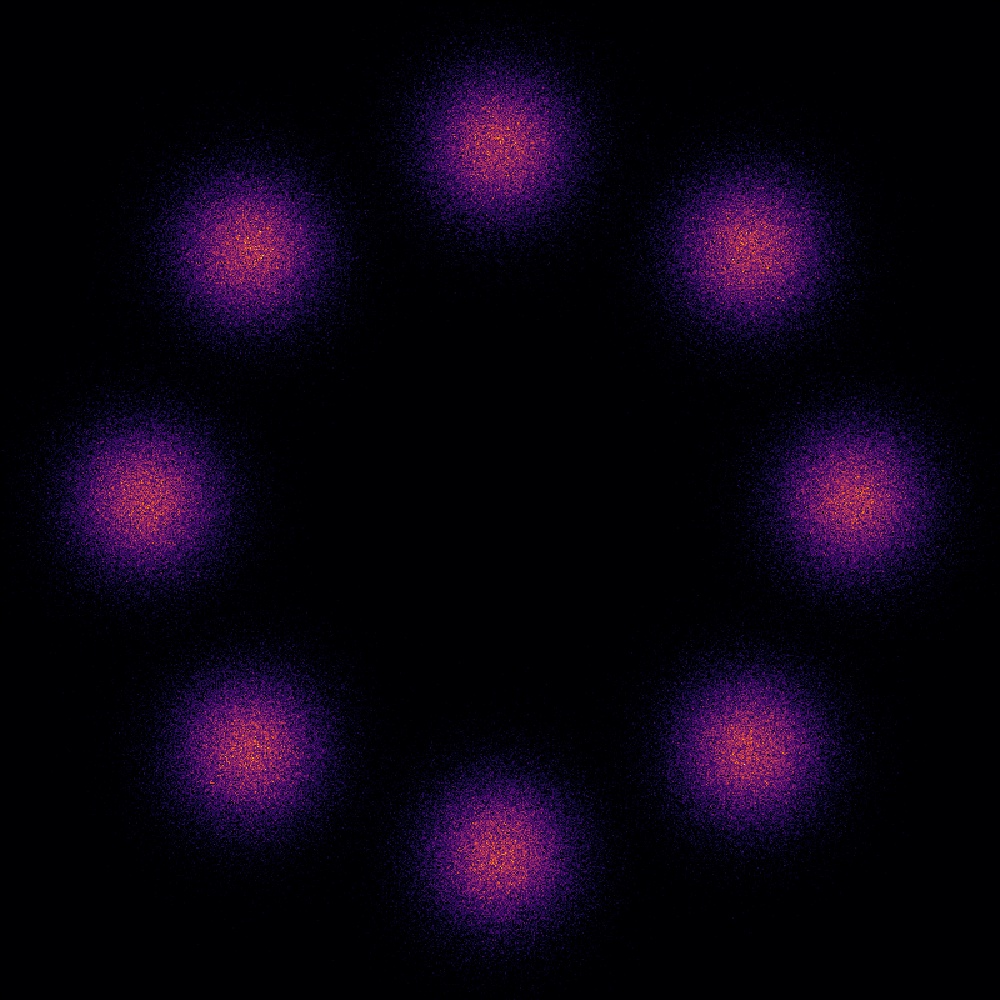}
    \end{subfigure}%
    \begin{subfigure}[b]{0.33\linewidth}
    \centering
    \includegraphics[width=\linewidth]{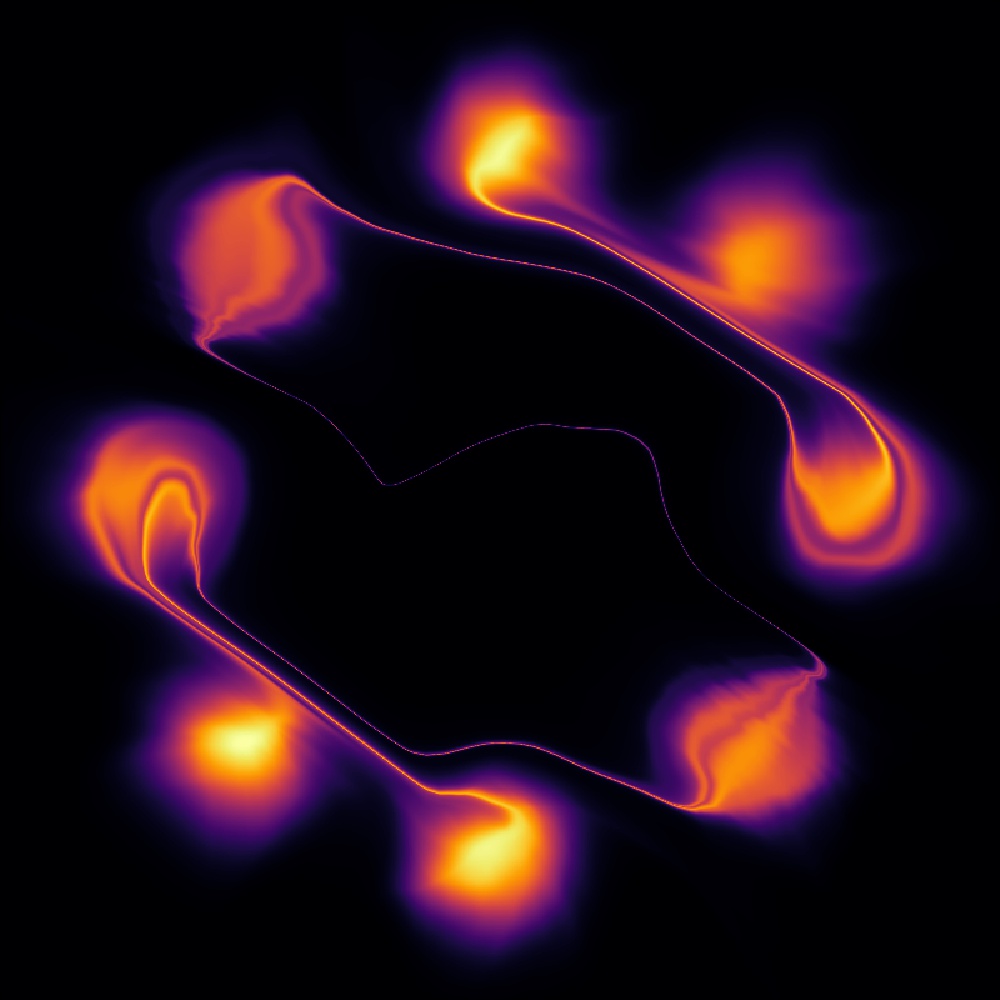}
    \end{subfigure}%
    \begin{subfigure}[b]{0.33\linewidth}
    \centering
    \includegraphics[width=\linewidth]{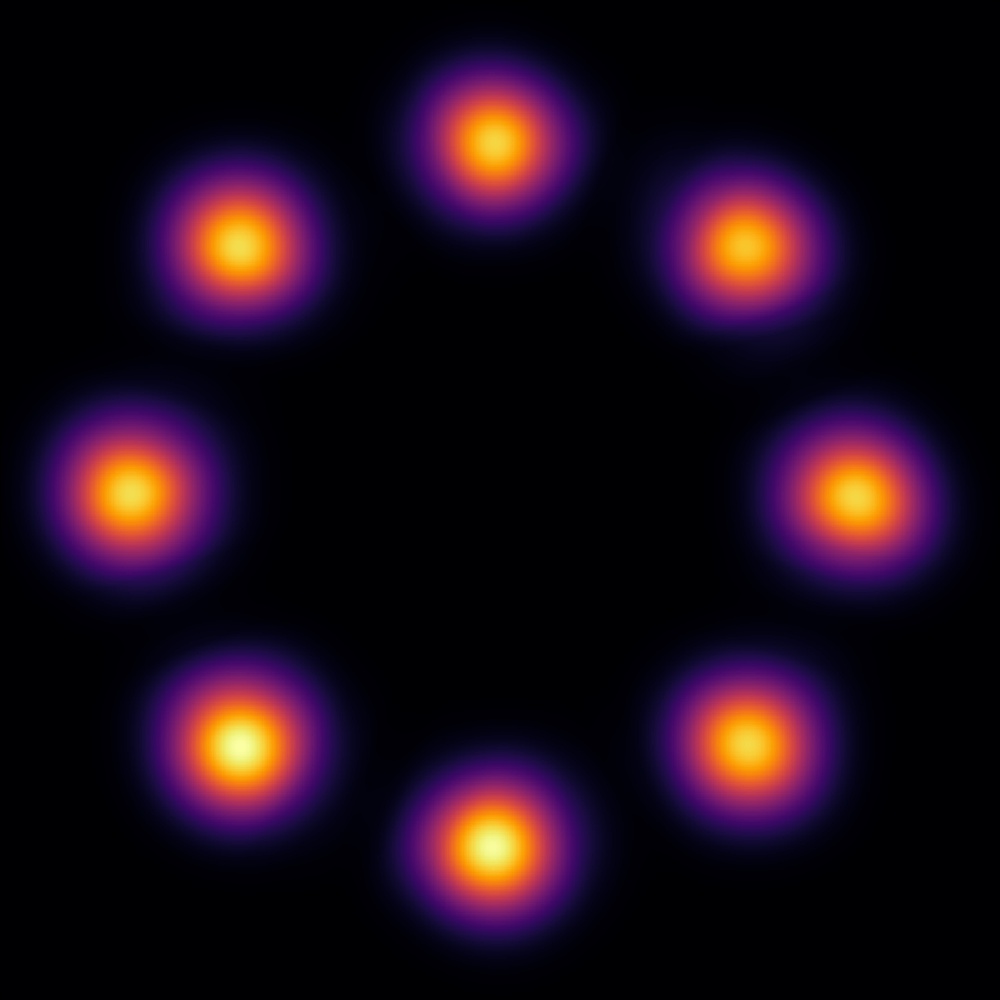}
    \end{subfigure}\\
    \begin{subfigure}[b]{0.33\linewidth}
    \centering
    \includegraphics[width=\linewidth]{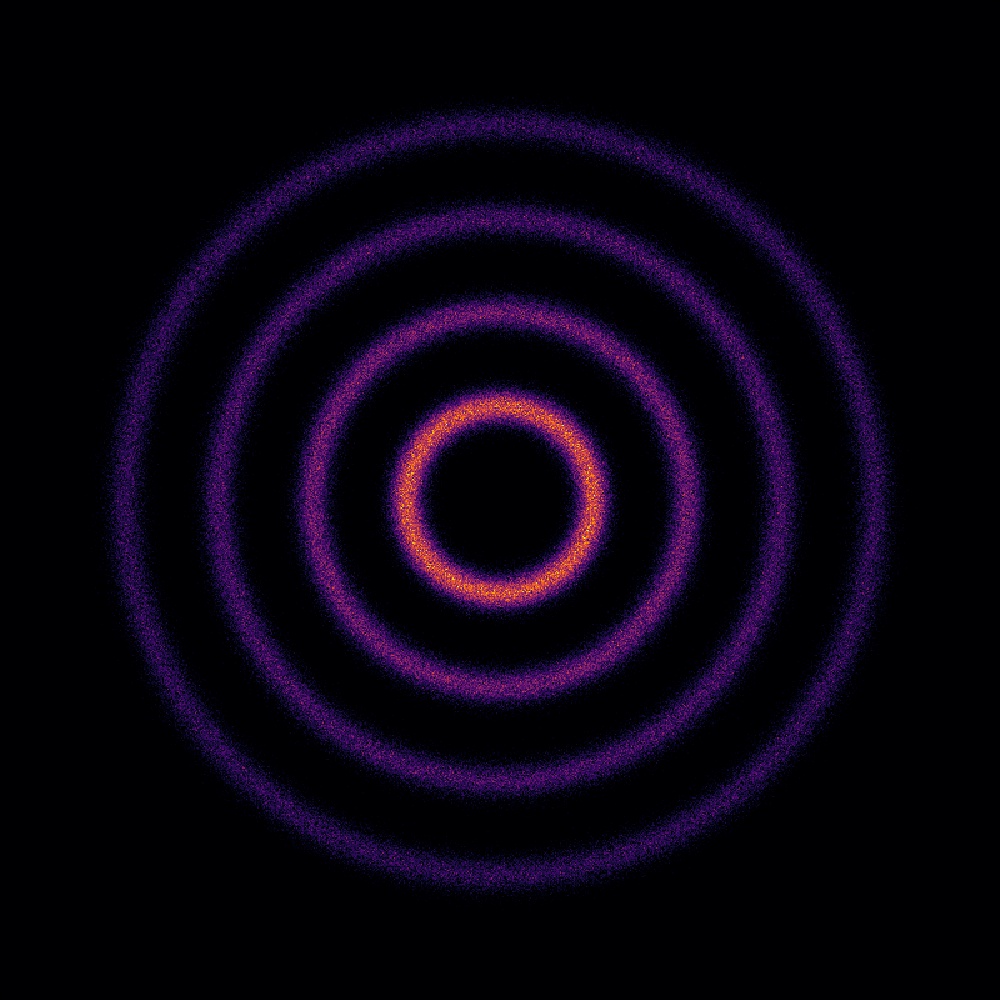}
    \caption*{Data Samples}
    \end{subfigure}%
    \begin{subfigure}[b]{0.33\linewidth}
    \centering
    \includegraphics[width=\linewidth]{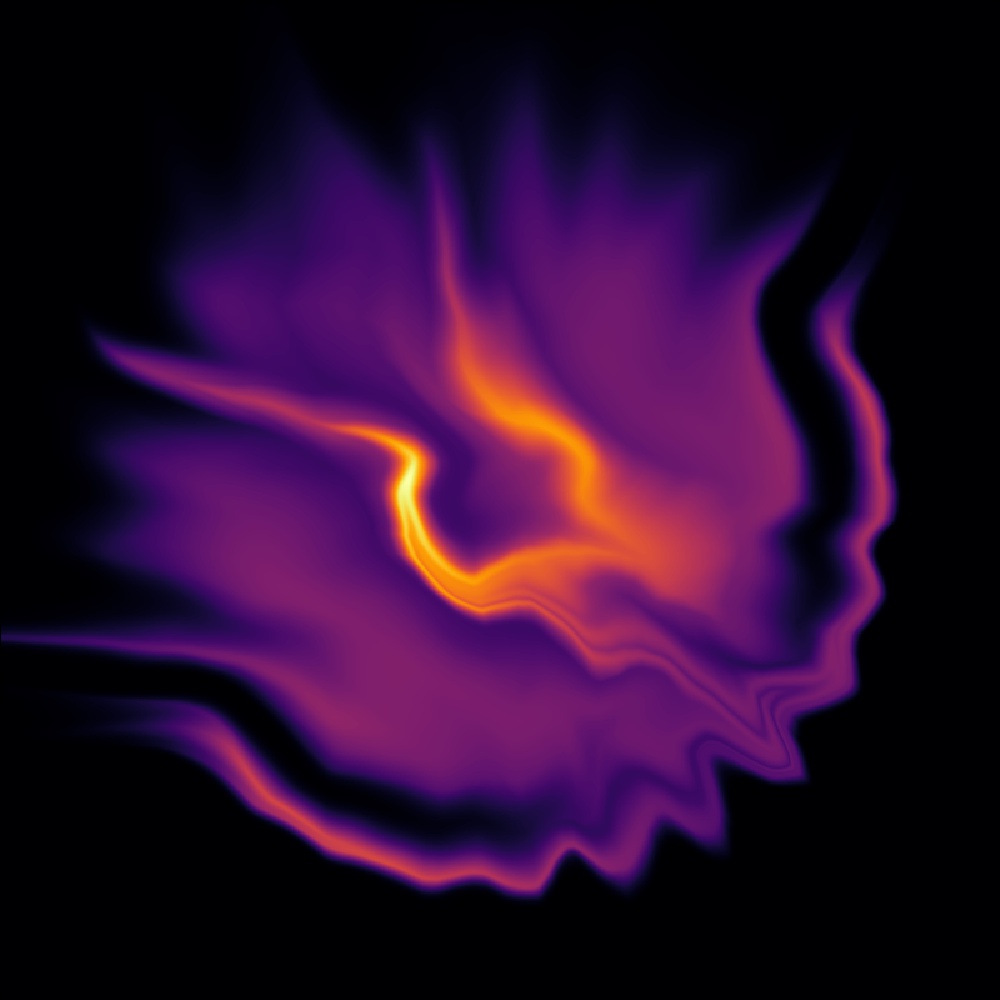}
    \caption*{Glow}
    \end{subfigure}%
    \begin{subfigure}[b]{0.33\linewidth}
    \centering
    \includegraphics[width=\linewidth]{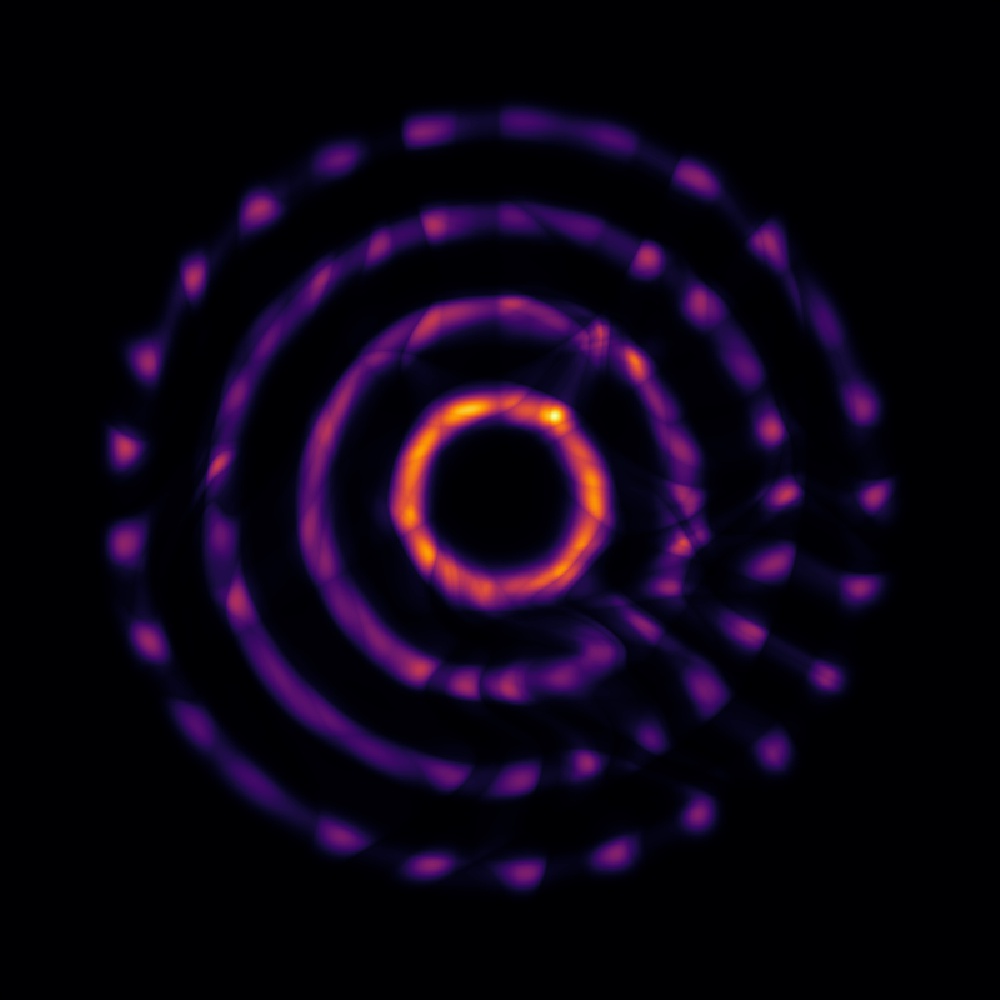}
    \caption*{i-ResNet}
    \end{subfigure}
    \caption{Visual comparison of i-ResNet flow and Glow. Details of this experiment can be found in Appendix C.3.}
    \label{fig:toy2d}
\end{figure}

To bypass this constraint we present a tractable approximation to the log-determinant term in Equation \eqref{eq:CoV}, which will scale to high dimensions $d$. Previously, \citet{leCunSpectralBackprop} introduced the application of log-determinant estimation to non-invertible deep generative models without the specific structure of i-ResNets.

First, we note that the Lipschitz constrained perturbations $x + g(x)$ of the identity yield positive determinants, hence
\begin{align*}
    |\det J_F(x) | = \det J_F(x),
\end{align*}
see Lemma 6 in Appendix A. Combining this result with the matrix identity $\ln \det (A)= \tr( \ln(A))$ for non-singular $A \in \mathbb{R}^{d \times d}$ (see e.g. \citet{Withers}), we have
\begin{align*}
    \ln |\det J_F(x)| = \tr( \ln J_F),
\end{align*}
where $\tr$ denotes the matrix trace and $\ln$ the matrix logarithm. Thus for $z= F(x) = (I + g) (x)$, it is
\begin{align*}
   \ln p_x(x) = \ln p_z(z) + \tr\left( \ln \big(I+ J_g (x)\big)\right).
\end{align*}
The trace of the matrix logarithm can be expressed as a power series \citep{Hall}
\begin{align}
\label{eq:powerSeriesLogDet}
   \tr\left(   \ln \big(I+ J_g (x)\big)\right) = \sum_{k=1}^\infty (-1)^{k+1}\frac{\tr(J_g^k)}{k},
\end{align}
which converges if $\|J_g\|_2 < 1$. Hence, due to the Lipschitz constraint, we can compute the log-determinant via the above power series with guaranteed convergence. 

Before we present a stochastic approximation to the above power series, we observe following properties of i-ResNets: Due to $\Lip(g_t) < 1$ for the residual block of each layer $t$, we can provide a lower and upper bound on its log-determinant with
\begin{align*}
    d \sum_{t=1}^T \ln (1 - \Lip(g_t)) \leq \ln |\det J_F(x) | \\
    d \sum_{t=1}^T \ln (1 + \Lip(g_t)) \geq \ln |\det J_F(x) |,
\end{align*}
for all $x\in \mathbb{R}$, see Lemma 7 in Appendix A. Thus, both the number of layers $T$ and the Lipschitz constant affect the contraction and expansion bounds of i-ResNets and must be taken into account when designing such an architecture.

\subsection{Stochastic Approximation of log-determinant}
\label{sec:approxLogDet}
Expressing the log-determinant with the power series in \eqref{eq:powerSeriesLogDet} has three main computational drawbacks: 1) Computing $\tr (J_g)$ exactly costs $\mathcal{O}(d^2)$, or approximately needs $d$ evaluations of $g$ as each entry of the diagonal of the Jacobian requires the computation of a separate derivative of $g$ \citep{ffjord}. 2) Matrix powers $J_g^k$ are needed, which requires the knowledge of the full Jacobian. 3) The series is infinite. 

Fortunately, drawback 1) and 2) can be alleviated. First, vector-Jacobian products $v^T J_g$ can be computed at approximately the same costs as evaluating $g$ through reverse-mode automatic differentiation. Second, a stochastic approximation of the matrix trace of $A \in \mathbb{R}^{d \times d}$
\begin{align*}
    \tr(A) = \mathbb{E}_{p(v)} \left[v^T A v \right],
\end{align*}
known as the Hutchinson´s trace estimator, can be used to estimate $\tr(J_g^k)$. The distribution $p(v)$ needs to fulfill $\mathbb{E}[v] = 0$ and $\text{Cov}(v) = I$, see \citep{Hutchinson, AvronToledo}.

While this allows for an unbiased estimate of the matrix trace, to achieve bounded computational costs, the power series \eqref{eq:powerSeriesLogDet} will be truncated at index $n$ to address drawback 3). Algorithm \ref{alg:algoOverview} summarizes the basic steps. The truncation turns the unbiased estimator into a biased estimator, where the bias depends on the truncation error. Fortunately, this error can be bounded as we demonstrate below.

To improve the stability of optimization when using this estimator we recommend using nonlinearities with continuous derivatives such as ELU~\citep{DBLP:journals/corr/ClevertUH15} or softplus instead of ReLU (See Appendix C.3).

\begin{algorithm}[tb]
   \caption{Forward pass of an invertible ResNets with Lipschitz constraint and log-determinant approximation, SN denotes spectral normalization based on \eqref{eq:spectralNorm}.}
   \label{alg:algoOverview}
\begin{algorithmic}
   \STATE {\bfseries Input:} data point $x$, network $F$, residual block $g$, number of power series terms $n$
   \FOR{Each residual block}
   \STATE Lip constraint: $\hat{W}_j := \text{SN}(W_j,x)$ for linear Layer $W_j$.
   \STATE Draw $v$ from $\mathcal{N}(0, I)$
   \STATE $w^T := v^T$ 
   \STATE $\ln \det := 0$
   \FOR{$k=1$ {\bfseries to} $n$}
   \STATE $w^T := w^T\, J_g$ (vector-Jacobian product)
   \STATE $\ln \det := \ln \det + (-1)^{k+1} w^T\, v / k$
   \ENDFOR
   \ENDFOR
\end{algorithmic}
\end{algorithm}

\subsection{Error of Power Series Truncation}
\label{sec:bound}
We estimate $\ln|\det( I + J_g)|$ with the finite power series
\begin{align}
    PS(J_g, n) := \sum_{k=1}^n (-1)^{k+1}\frac{\tr(J_g^k)}{k},
\end{align}
where we have (with some abuse of notation) $PS(J_g, \infty) = \tr(\ln(I + J_g))$. 
We are interested in bounding the truncation error of the log-determinant as a function of the data dimension $d$, the Lipschitz constant $\Lip(g)$ and the number of terms in the series $n$. 
\begin{theorem}[Approximation error of Loss]
\label{thm:biasLoss}
Let $g$ denote the residual function and $J_g$ the Jacobian as before. Then, the error of a truncated power series at term $n$ is bounded as
\begin{align*}
    & |PS(J_g, n) - \ln \det( I + J_g)| \\
    \leq & -d \left(\ln(1 - \Lip(g)) + \sum_{k=1}^n \frac{\Lip(g)^k}{k}\right).
\end{align*}
\end{theorem}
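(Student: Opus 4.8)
The plan is to recognize that the truncation error is exactly the tail of an absolutely convergent power series, and then to bound that tail term by term. Using the definition of $PS$ together with the identity $PS(J_g,\infty) = \tr(\ln(I+J_g)) = \ln\det(I+J_g)$ established in the preceding discussion, I would first write the error as the discarded tail,
\begin{align*}
PS(J_g, n) - \ln\det(I + J_g) = -\sum_{k=n+1}^\infty (-1)^{k+1} \frac{\tr(J_g^k)}{k},
\end{align*}
so that the triangle inequality gives
\begin{align*}
|PS(J_g,n) - \ln\det(I+J_g)| \leq \sum_{k=n+1}^\infty \frac{|\tr(J_g^k)|}{k}.
\end{align*}

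The central estimate is a uniform bound on $|\tr(J_g^k)|$. I would use that the eigenvalues $\lambda_1,\dots,\lambda_d$ of $J_g$ satisfy $\tr(J_g^k) = \sum_{i} \lambda_i^k$ (via a Schur or Jordan decomposition, which is valid even when $J_g$ is non-symmetric and not diagonalizable), and that each eigenvalue magnitude is controlled by the spectral radius, hence by the spectral norm: $|\lambda_i| \leq \rho(J_g) \leq \|J_g\|_2 \leq \Lip(g)$. Summing over the $d$ eigenvalues then yields
\begin{align*}
|\tr(J_g^k)| \leq \sum_{i=1}^d |\lambda_i|^k \leq d\,\Lip(g)^k.
\end{align*}
This is the one step that genuinely uses linear-algebraic structure rather than scalar series manipulation, and it is where both the factor $d$ and the Lipschitz constant enter. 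I expect it to be the main obstacle, chiefly in justifying the trace--eigenvalue identity for a general Jacobian and in invoking $\|J_g\|_2 \leq \Lip(g)$.

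Finally I would substitute this estimate and identify the remaining scalar tail as the remainder of the Taylor series of $-\ln(1-x)$ evaluated at $x = \Lip(g) < 1$. Concretely, from $\sum_{k=1}^\infty \Lip(g)^k/k = -\ln(1-\Lip(g))$ we get
\begin{align*}
\sum_{k=n+1}^\infty \frac{\Lip(g)^k}{k} = -\ln(1-\Lip(g)) - \sum_{k=1}^n \frac{\Lip(g)^k}{k},
\end{align*}
and combining the displays produces
\begin{align*}
|PS(J_g,n) - \ln\det(I+J_g)| \leq -d\left(\ln(1-\Lip(g)) + \sum_{k=1}^n \frac{\Lip(g)^k}{k}\right),
\end{align*}
which is exactly the claimed bound. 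The absolute convergence needed to split off the tail and rearrange follows from the same term-wise estimate, since $\sum_k |\tr(J_g^k)|/k \leq -d\ln(1-\Lip(g)) < \infty$, so no separate convergence argument is required beyond the Lipschitz constraint already in force.
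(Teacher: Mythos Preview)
Your proof is correct and follows essentially the same approach as the paper: write the error as the tail of the power series, bound $|\tr(J_g^k)| \leq d\,\Lip(g)^k$ via the eigenvalue--spectral-radius--spectral-norm chain, and identify the resulting scalar tail with the remainder of the Taylor series of $-\ln(1-x)$. The only cosmetic difference is that the paper bounds $\rho(J_g^k) \leq \|J_g^k\|_2 \leq \|J_g\|_2^k$ directly, whereas you pass through the individual eigenvalues of $J_g$; both routes yield the same estimate.
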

While the result above gives an error bound for evaluation of the loss, during training the error in the gradient of the loss is of greater interest. Similarly, we can obtain the following bound. The proofs are given in Appendix A.

\begin{theorem}[Convergence Rate of Gradient Approximation]
\label{thm:biasGrad}
Let $\theta \in \mathbb{R}^p$ denote the parameters of network $F$, let $g, J_g$ be as before. Further, assume bounded inputs and a Lipschitz activation function with Lipschitz derivative. Then, we obtain the convergence rate
\begin{align*}
    \|\nabla_\theta \left(\ln \det \big(I + J_g\big)) -  PS\big(J_g, n\big)\right)\|_\infty = \mathcal{O}(c^n)
\end{align*}
where $c := \Lip(g)$ and $n$ the number of terms used in the power series.
\end{theorem}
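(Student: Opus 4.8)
The plan is to write the approximation error as the tail of the convergent series \eqref{eq:powerSeriesLogDet} and then bound the $\theta$-gradient of that tail geometrically, in close parallel to the loss bound of Theorem~\ref{thm:biasLoss} but with an extra layer of matrix calculus. Since $\|J_g\|_2 \le \Lip(g) = c < 1$, the series converges and
\[
\ln\det(I+J_g) - PS(J_g,n) = \sum_{k=n+1}^{\infty} (-1)^{k+1}\frac{\tr(J_g^k)}{k},
\]
so that, interchanging $\nabla_\theta$ with the sum (to be justified a posteriori by the uniform geometric bound derived below),
\[
\nabla_\theta\!\left(\ln\det(I+J_g) - PS(J_g,n)\right) = \sum_{k=n+1}^{\infty} \frac{(-1)^{k+1}}{k}\, \nabla_\theta \tr(J_g^k).
\]
Thus the whole problem reduces to controlling the gradient of each trace power.

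For a single term I would invoke the standard identity, obtained from the product rule together with the cyclic invariance of the trace,
\[
\frac{\partial}{\partial \theta_j}\tr(J_g^k) = k\,\tr\!\left(J_g^{k-1}\,\frac{\partial J_g}{\partial \theta_j}\right),
\]
and then apply two elementary inequalities: $|\tr(M)| \le d\,\|M\|_2$ for any $M \in \mathbb{R}^{d\times d}$, and submultiplicativity of the spectral norm. Using $\|J_g\|_2 \le c$ this yields
\[
\left|\frac{\partial}{\partial \theta_j}\tr(J_g^k)\right| \le k\, d\, \|J_g\|_2^{\,k-1}\,\left\|\frac{\partial J_g}{\partial \theta_j}\right\|_2 \le k\, d\, c^{\,k-1}\, M,
\]
where $M := \sup_{j}\sup_{x}\bigl\|\partial_{\theta_j} J_g(x)\bigr\|_2$.

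The crux, and the step I expect to be the main obstacle, is showing that $M$ is finite and independent of $k$ and $n$; this is exactly where the hypotheses enter. The quantity $\partial_{\theta_j} J_g$ is assembled, via the chain rule on the block $g = W_3\phi(W_2\phi(W_1\cdot))$, from the network weights, the intermediate activations, and the first and second derivatives of $\phi$. A Lipschitz activation bounds the first derivatives, a \emph{Lipschitz derivative} of the activation bounds the second derivatives, the spectral normalization \eqref{eq:spectralNorm} bounds every $\|W_i\|_2 \le c$, and bounded inputs together with $\Lip(g)<1$ confine all intermediate activations to a compact set; composing these gives a finite $M$ depending only on $g$, the input bound and $c$. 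Granting this uniform bound, I would finish by summing the geometric tail,
\[
\left\|\nabla_\theta\!\left(\ln\det(I+J_g) - PS(J_g,n)\right)\right\|_\infty \le \sum_{k=n+1}^{\infty}\frac{1}{k}\, k\, d\, c^{\,k-1}\, M = dM\!\!\sum_{k=n+1}^{\infty}\! c^{\,k-1} = \frac{dM}{1-c}\,c^{\,n},
\]
which is $\mathcal{O}(c^n)$ and, being a uniform-in-$j$ dominating bound, also retroactively legitimizes the earlier exchange of differentiation and infinite summation.
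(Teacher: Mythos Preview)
Your proposal is correct and follows essentially the same route as the paper's proof: differentiate the power series term by term via the cyclic-trace identity $\partial_{\theta_j}\tr(J_g^k)=k\,\tr(J_g^{k-1}\partial_{\theta_j}J_g)$, bound each term by $d\,c^{k-1}\|\partial_{\theta_j}J_g\|_2$ using $|\tr(M)|\le d\|M\|_2$ and submultiplicativity, argue that $\|\partial_{\theta_j}J_g\|_2$ is uniformly bounded from the Lipschitz activation, its Lipschitz derivative, the spectral normalization of the $W_i$, and the bounded inputs, and then sum the geometric tail. If anything, your write-up is slightly tidier than the paper's in that you explicitly flag and justify the interchange of $\nabla_\theta$ with the infinite sum via the uniform geometric majorant.
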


In practice, only 5-10 terms must be taken to obtain a bias less than .001 bits per dimension, which is typically reported up to .01 precision (See Appendix E).

\begin{table*}
\begin{center}
\begin{tabular}{S|SSSSS|S}
      {Method} & {ResNet} & {NICE/ i-RevNet} & {Real-NVP} & {Glow} & {FFJORD} & {i-ResNet} \\ \midrule
      {Free-form} & {\cmark}  & {\xmark} & {\xmark} & {\xmark} & {\cmark} & {\cmark} \\
      {Analytic Forward} & {\cmark}  & {\cmark} & {\cmark} & {\cmark} & {\xmark} & {\cmark}\\ 
      {Analytic Inverse} & {N/A}  & {\cmark}& {\cmark} & {\xmark} & {\xmark} & {\xmark}\\
      {Non-volume Preserving} & {N/A}  & {\xmark} & {\cmark} & {\cmark} & {\cmark} & {\cmark}\\
      {Exact Likelihood} & {N/A}  & {\cmark} & {\cmark} & {\cmark} & {\xmark} & {\xmark}\\
      {Unbiased Stochastic Log-Det Estimator} & {N/A} & {N/A} & {N/A} & {N/A} & {\cmark} & {\xmark} \\
      \bottomrule
\end{tabular}
\caption{Comparing i-ResNet and ResNets to NICE \citep{dinh2014nice}, Real-NVP \citep{dinh2016density}, Glow \citep{kingma2018glow} and FFJORD \citep{ffjord}. Non-volume preserving refers to the ability to allow for contraction and expansions and exact likelihood to compute the change of variables \eqref{eq:CoV} exactly. The unbiased estimator refers to a stochastic approximation of the log-determinant, see section \ref{sec:approxLogDet}.}
\label{tab:invertibleComparison}
\end{center}
\end{table*}

\section{Related Work}




\subsection{Reversible Architectures}

We put our focus on invertible architectures with efficient inverse computation, namely NICE~\citep{dinh2014nice}, i-RevNet~\citep{jacobsen2018irevnet}, Real-NVP~\citep{dinh2016density}, Glow~\citep{kingma2018glow} and Neural ODEs~\citep{chen2018neural} and its stochastic density estimator FFJORD~\citep{ffjord}. A summary of the comparison between different reversible networks is given in Table \ref{tab:invertibleComparison}.

The dimension-splitting approach used in NICE, i-RevNet, Real-NVP and Glow allows for both analytic forward and inverse mappings. However, this restriction required the introduction of additional steps like invertible $1\cross1$ convolutions in Glow \citep{kingma2018glow}. These $1\cross1$ convolutions need to be inverted numerically, making Glow altogether not analytically invertible. In contrast, i-ResNet can be viewed as an intermediate approach, where the forward mapping is given analytically, while the inverse can be computed via a fixed-point iteration. 

Furthermore, an i-ResNet block has a Lipschitz bound both for forward and inverse (Lemma \ref{thm:lipForwardInverse}), while other approaches do not have this property by design. Hence, i-ResNets could be an interesting avenue for stability-critical applications like inverse problems \citep{ardizzone2018analyzing} or invariance-based adversarial vulnerability \citep{jacobsen2018excessive}. 

Neural ODEs~\citep{chen2018neural} allow free-form dynamics similar to i-ResNets, meaning that any architecture could be used as long as the input and output dimensions are the same. To obtain discrete forward and inverse dynamics, Neural ODEs rely on adaptive ODE solvers, which allows for an accuracy vs. speed trade-off. Yet, scalability to very high input dimension such as high-resolution images remains unclear.

\subsection{Ordinary Differential Equations}
Due to the similarity of ResNets and Euler discretizations, there are many connections between the i-ResNet and ODEs, which we review in this section.


\textbf{Relationship of i-ResNets to Neural ODEs:} 
The view of deep networks as dynamics over time offers two fundamental learning approaches: 1) Direct learning of dynamics using discrete architectures like ResNets \citep{haberRuthotto, ruthottoHaber, luODE, naisnet}. 2) Indirect learning of dynamics via parametrizing an ODE with a neural network as in \citet{chen2018neural, ffjord}.

The dynamics $x(t)$ of a fixed ResNet $F_\theta$ are only defined at time
points $t_i$ corresponding to each block $g_{\theta_{t_i}}$. However, a linear interpolation in time can be used to generate continuous dynamics. See Figure \ref{fig:crossingPaths}, where the continuous dynamics of a linearly interpolated invertible ResNet are shown against those of a standard ResNet. Invertible ResNets are bijective along the continuous path while regular ResNets may result in crossing or merging paths. The indirect approach of learning an ODE, on the other hand, adapts the discretization based on an ODE-solver, but does not have a fixed computational budget compared to an i-ResNet. 

\textbf{Stability of ODEs:} There are two main approaches to study the stability of ODEs, 1) behavior for $t\rightarrow \infty$ and 2) Lipschitz stability over finite time intervals $[0,T]$. Based on time-invariant dynamics $f(x(t))$, \citep{naisnet} constructed asymptotically stable ResNets using anti-symmetric layers such that $Re(\lambda(J_x)) < 0$  (with $Re(\lambda(\cdot))$ denoting the real-part of eigenvalues, $\rho(\cdot)$ spectral radius and $J_x g$ the Jacobian at point x). By projecting weights based on the Gershgorin circle theorem, they further fulfilled $\rho(J_x g) < 1$, yielding asymptotically stable ResNets with shared weights over layers. On the other hand, \citep{haberRuthotto, ruthottoHaber} considered time-dependent dynamics $f(x(t), \theta(t))$ corresponding to standard ResNets. They induce stability by using anti-symmetric layers and projections of the weights.  Contrarily, initial value problems on $[0,T]$ are well-posed for Lipschitz continuous dynamics \citep{ascher2008numerical}. Thus, the invertible ResNet with $\Lip (f) < 1$ can be understood as a stabilizer of an ODE for step size $h=1$ without a restriction to anti-symmetric layers as in \citet{ruthottoHaber, haberRuthotto, naisnet}.

\subsection{Spectral Sum Approximations}

The approximation of spectral sums like the log-determinant is of broad interest for many machine learning problems such as Gaussian Process regression \citep{DongGPs}. Among others, Taylor approximation \citep{BOUTSIDIS201795} of the log-determinant similar to our approach or Chebyshev polynomials \citep{Han} are used. In \citet{BOUTSIDIS201795}, error bounds on the estimation via truncated power series and stochastic trace estimation are given for symmetric positive definite matrices. However, $I+ J_g$ is not symmetric and thus, their analysis does not apply here.

Recently, unbiased estimates \citep{adams} and unbiased gradient estimators \citep{HanGradients} were proposed for symmetric positive definite matrices. Furthermore, Chebyshev polynomials have been used to approximate the log-determinant of Jacobian of deep neural networks in \citet{leCunSpectralBackprop} for density matching and evaluation of the likelihood of GANs.

\section{Experiments}
\label{sec:exp}
We complete a thorough experimental survey of invertible ResNets. First, we numerically verify the invertibility of i-ResNets. Then, we investigate their discriminative abilities on a number of common image classification datasets. Furthermore, we compare the discriminative performance of i-ResNets to other invertible networks. Finally, we study how i-ResNets can be used to define generative models.

\subsection{Validating Invertibility and Classification}

To compare the discriminative performance and invertibility of i-ResNets with standard ResNet architectures, we train both models on CIFAR10, CIFAR100, and MNIST. 
The CIFAR and MNIST models have models have 54 and 21 residual blocks, respectively and we use identical settings for all other hyperparameters. We replace strided downsampling with ``invertible downsampling" operations~\cite{jacobsen2018irevnet} to ensure bijectivity, see Appendix C.2 for training and architectural details. We increase the number of input channels to 16 by padding with zeros. 
This is analagous to the standard practice of projecting the data into a higher-dimensional space using a standard convolutional layer at the input of a model, but this mapping is reversible.
To obtain the numerical inverse, we apply 100 fixed point iterations (Equation \eqref{eq:fixedpoint}) for each block.
This number is chosen to ensure that the poor reconstructions for vanilla ResNets (see Figure \ref{fig:reconstructionCifar}) are not due to using too few iterations. In practice far fewer iterations suffice, as the trade-off between reconstruction error and number of iterations analyzed in Appendix D shows. 

\begin{figure}
\includegraphics[width=1.\linewidth]{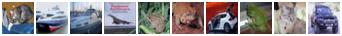}\\
\includegraphics[width=1.\linewidth]{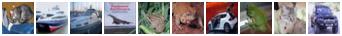}\\
\includegraphics[width=1.\linewidth]{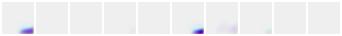}
\vspace{-7mm}
\caption{Original images (top) and reconstructions from i-ResNet with $c=0.9$ (middle) and a standard ResNet with the same architecture (bottom), showing that the fixed point iteration does not recover the input without the Lipschitz constraint.}
    \label{fig:reconstructionCifar}

\end{figure}

\begin{table*}
\begin{center}
\begin{tabular}{SSS|SSSSSSSS}
      {} & {} & {ResNet-164} & {Vanilla} &{$c = 0.9$}  & {$c = 0.8$}& {$c = 0.7$} & {$c = 0.6$}  & {$c = 0.5$} \\ \midrule   
    {\textbf{Classification}} & {MNIST} & {-} & {0.38}  & {0.40} & {0.42} & {0.40} & {0.42} &{0.86}   \\ 
    {Error \%} & {CIFAR10} & {5.50} & {6.69}  & {6.78} & {6.86} & {6.93} & {7.72} & {8.71}  \\ 
    {} & {CIFAR100} & {24.30} & {23.97}  & {24.58} & {24.99} & {25.99} & {27.30} &{29.45}   \\ \midrule
    {\textbf{Guaranteed Inverse}}& {} &{\textbf{No}} &{\textbf{No}}  & {\textbf{Yes}}& {\textbf{Yes}} & {\textbf{Yes}}  & {\textbf{Yes}} & {\textbf{Yes}} \\ \bottomrule  
\end{tabular}
\caption{Comparison of i-ResNet to a ResNet-164 baseline architecture of similar depth and width with varying Lipschitz constraints via coefficients $c$. Vanilla shares the same architecture as i-ResNet, without the Lipschitz constraint.}
\label{tab:iresnet164_cifar_classification}
\end{center}
\end{table*}

Classification and reconstruction results for a baseline pre-activation ResNet-164, a ResNet with architecture like i-ResNets without Lipschitz constraint (denoted as vanilla) and five invertible ResNets with different spectral normalization coefficients are shown in Table \ref{tab:iresnet164_cifar_classification}. The results illustrate that for larger settings of the layer-wise Lipschitz constant $c$, our proposed invertible ResNets perform competitively with the baselines in terms of classification performance, while being provably invertible. When applying very conservative normalization (small $c$), the classification error becomes higher on all datasets tested. 

To demonstrate that our normalization scheme is effective and that standard ResNets are not generally invertible, we reconstruct inputs from the features of each model using Algorithm \ref{alg:inverse}. Intriguingly, our analysis also reveals that unconstrained ResNets are invertible after training on MNIST (see Figure 7 in Appendix B), whereas on CIFAR10/100 they are not. Further, we find ResNets with and without BatchNorm are not invertible after training on CIFAR10, which can also be seen from the singular value plots in Appendix B (Figure 6). The runtime on 4 GeForce GTX 1080 GPUs with 1 spectral norm iteration was 0.5 sec for a forward and backward pass of batch with 128 samples, while it took 0.2 sec without spectral normalization. See section C.1 (appendix) for details on the runtime.

The reconstruction error decays quickly and the errors are already imperceptible after 5-20 iterations, which is the cost of 5-20 times the forward pass and corresponds to 0.15-0.75 seconds for reconstructing 100 CIFAR10 images.

Computing the inverse is fast even for the largest normalization coefficient, but becomes faster with stronger normalization. The number of iterations needed for full convergence is approximately cut in half when reducing the spectral normalization coefficient by 0.2, see Figure 8 (Appendix D) for a detailed plot. We also ran an i-RevNet \cite{jacobsen2018irevnet} with comparable hyperparameters as ResNet-164 and it performs on par with ResNet-164 with 5.6\%. Note however, that i-RevNets, like NICE~\cite{dinh2014nice}, are volume-conserving, making them less well-suited to generative modeling.

In summary, we observe that invertibility without additional constraints is unlikely, but possible, whereas it is hard to predict if networks will have this property. In our proposed model, we can guarantee the existence of an inverse without significantly harming classification performance.

\subsection{Comparison with Other Invertible Architectures}

\begin{table}[b]
\begin{center}
\begin{tabular}{SSS|S} \toprule
      {Affine Glow}& {Additive Glow}  & {i-ResNet} & {i-ResNet} \\ 
      {$1\cross1$ Conv}& {Reverse}& {Glow-Style} & {164} \\
      \midrule
    {12.63} & {12.36}  & {8.03}  &{6.69} 
\end{tabular}
\caption{CIFAR10 classification results compared to state-of-the-art flow Glow as a classifier.
We compare two versions of Glow, as well as an i-ResNet architecture as similar as possible to Glow in its number of layers and channels, termed ``i-ResNet, Glow-Style''.}
\label{tab:glow_comp}
\end{center}
\end{table}

In this section we compare i-ResNet classifiers to the state-of-the-art invertible flow-based model Glow. We take the implementation of \citet{kingma2018glow} and modify it to classify CIFAR10 images (with no generative modeling component). We create an i-ResNet that is as close as possible in structure to the default Glow model on CIFAR10 (denoted as i-ResNet Glow-style) and compare it to two variants of Glow, one that uses learned ($1 \cross 1$ convolutions) and affine block structure, and one with reverse permutations (like Real-NVP) and additive block structure. Results of this experiment can be found in Table \ref{tab:glow_comp}. We can see that i-ResNets outperform all versions of Glow on this discriminative task, even when adapting the network depth and width to that of Glow. This indicates that i-ResNets have a more suitable inductive bias in their block structure for discriminative tasks than Glow. 

We also find that i-ResNets are considerably easier to train than these other models. We are able to train i-ResNets using SGD with momentum and a learning rate of $0.1$ whereas all version of Glow we tested needed Adam or Adamax~\citep{kingma2014adam} and much smaller learning rates to avoid divergence. 

\subsection{Generative Modeling}
We run a number of experiments to verify the utility of i-ResNets in building generative models.
First, we compare i-ResNet Flows with Glow~\citep{kingma2018glow} on simple two-dimensional datasets.
Figure~\ref{fig:toy2d} qualitatively shows the density learned by a Glow model with 100 coupling layers and 100 invertible linear transformations. We compare against an i-ResNet where the coupling layers are replaced by invertible residual blocks with the same number of parameters and the invertible linear transformations are replaced by actnorm~\citep{kingma2018glow}. This results in the i-ResNet model having slightly fewer parameters, while maintaining an equal number of layers. In this experiment we train i-ResNets using the brute-force computed log-determinant since the data is two-dimensional. We find that i-ResNets are able to more accurately fit these simple densities.
As stated in \citet{ffjord}, we believe this is due to our model's ability to avoid partitioning dimensions. 

\begin{figure}[b]
    \centering
    \includegraphics[width=.95\linewidth]{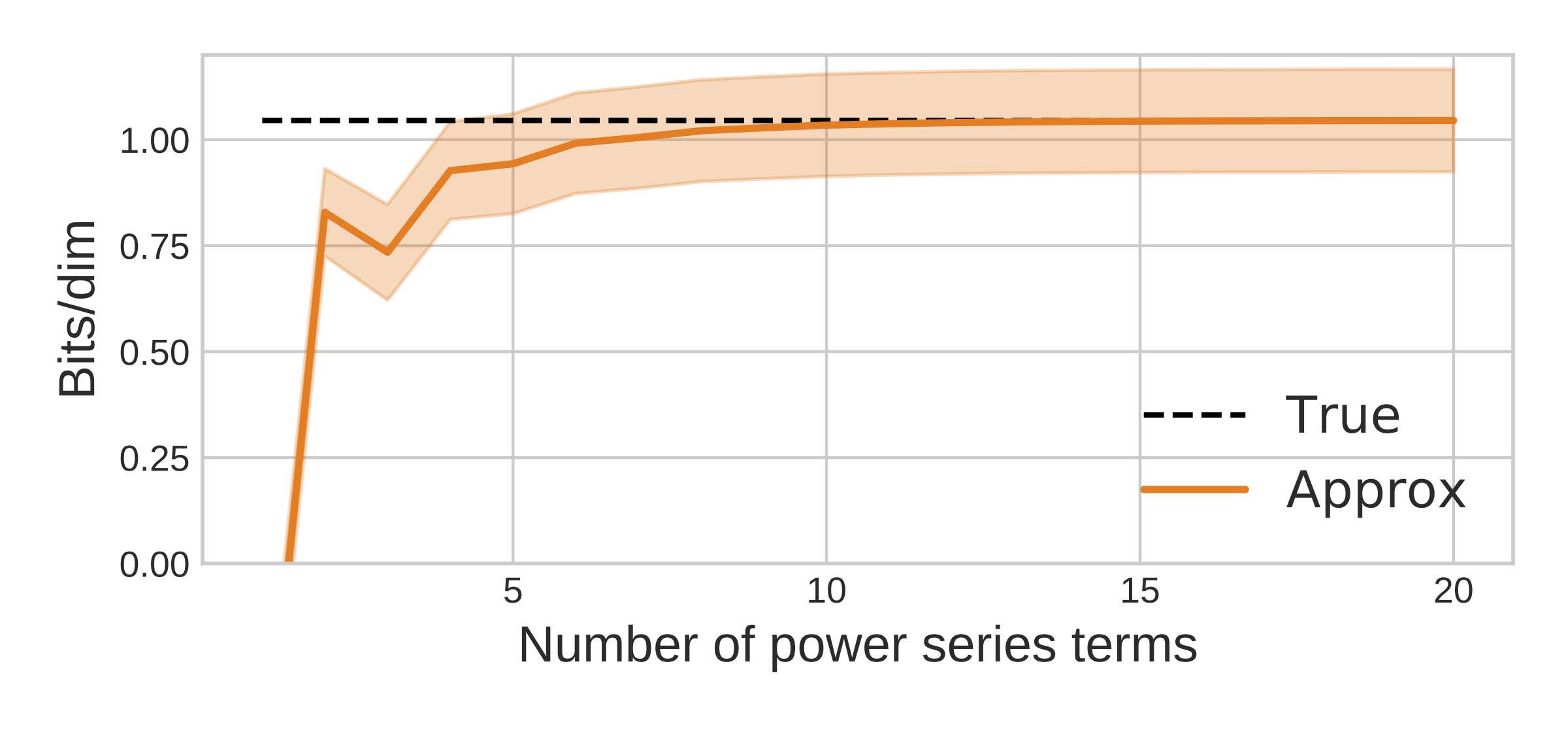}
    \vspace{-5mm}
    \caption{Bias and standard deviation of our log-determinant estimator as the number of power series terms increases. Variance is due to the stochastic trace estimator.}
    \label{fig:biasPlot}
\end{figure}

Next we evaluate i-ResNets as a generative model for images on MNIST and CIFAR10. Our models consist of multiple i-ResNet blocks followed by invertible downsampling or dimension ``squeezing'' to downsample the spatial dimensions. We use multi-scale architectures like those of \citet{dinh2016density, kingma2018glow}. In these experiments we train i-ResNets using the log-determinant approximation, see Algorithm \ref{alg:algoOverview}. Full architecture, experimental, and evaluation details can be found in Appendix C.3. Samples from our CIFAR10 model are shown in Figure \ref{fig:cifar_samples} and samples from our MNIST model can be found in Appendix F. Compared to the classification model, the log-determinant approximation with 5 series terms roughly increased the computation times by a factor of 4. The bias and variance of our log-determinant estimator is shown in Figure \ref{fig:biasPlot}.


Results and comparisons to other generative models can be found in Table \ref{tab:bitdim}. While our models did not perform as well as Glow and FFJORD, we find it intriguing that ResNets, with very little modification, can create a generative model competitive with these highly engineered models. We believe the gap in performance is mainly due to our use of a biased log-determinant estimator and that the use of an unbiased method~\cite{HanGradients} can help close this gap.  

\begin{figure}
    \centering
    \includegraphics[width=.99\linewidth]{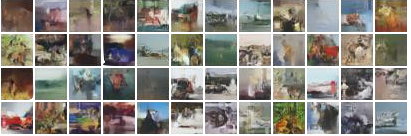}
    \caption{CIFAR10 samples from our i-ResNet flow. More samples can be found in Appendix F.}
    \label{fig:cifar_samples}
\end{figure}

\begin{table}
\begin{tabular}{@{}l|cc@{}}
Method & MNIST & CIFAR10 \\
\midrule
NICE~\citep{dinh2014nice} & 4.36 & 4.48$\dagger$\\
MADE~\citep{MADE} & 2.04 & 5.67 \\
MAF~\citep{MAF}  & 1.89 & 4.31 \\
Real NVP~\citep{dinh2016density} & 1.06 & 3.49 \\
Glow~\citep{kingma2018glow} & 1.05 & 3.35 \\
FFJORD~\citep{ffjord} & 0.99 & 3.40 \\
\midrule
i-ResNet & 1.06 & 3.45 
\end{tabular}
\caption{MNIST and CIFAR10 bits/dim results.  $\dagger$ Uses ZCA preprocessing making results not directly comparable.}
\label{tab:bitdim}
\end{table}

\section{Other Applications}

In many applications, a secondary unsupervised learning or generative modeling objective is formulated in combination with a primary discriminative task. i-ResNets are appealing here, as they manage to achieve competitive performance on both discriminative and generative tasks. We summarize some application areas to highlight that there is a wide variety of tasks for which i-ResNets would be promising to consider:
\begin{itemize}
    \item Hybrid density and discriminative models for joint classification and detection or fairness applications \citep{Nalisnick2018hybrid,louizos2015variational}
    \item Unsupervised learning for downstream tasks \citep{hjelm2018learning,oord2018representation}
    \item Semi-supervised learning from few labeled examples \citep{oliver2018realistic,kingma2014semi}
    \item Solving inverse problems with hybrid regression and generative losses \citep{ardizzone2018analyzing}
    \item Adversarial robustness with likelihood-based generative models \citep{schott2018towards,jacobsen2018excessive}
\end{itemize}

Finally, it is plausible that the Lipschitz bounds on the layers of the i-ResNet could aid with the stability of gradients for optimization, as well as adversarial robustness.

\section{Conclusions}
We introduced a new architecture, i-ResNets, which allow free-form layer architectures while still providing tractable density estimates.
The unrestricted form of the Jacobian allows expansion and contraction via the residual blocks, while partitioning-based models \citep{dinh2014nice, dinh2016density, kingma2018glow} must include affine blocks and scaling layers to be non-volume preserving.

Several challenges remain to be addressed in future work.
First, our estimator of the log-determinant is biased.
However, there have been recent advances in building unbiased estimators for the log-determinant~\citep{HanGradients}, which we believe could improve the performance of our generative model.
Second, learning and designing networks with a Lipschitz constraint is challenging.
For example, we need to constrain each linear layer in the block instead of being able to directly control the Lipschitz constant of a block, see \citet{sorting} for a promising approach for addressing this problem.


\section*{Acknowledgments}
We thank Rich Zemel for very helpful comments on an earlier version of the manuscript. We thank Yulia Rubanova for spotting a mistake in one of the proofs. We also thank everyone else at Vector for helpful discussions and feedback.

We gratefully acknowledge the financial support from the German Science Foundation for RTG 2224 "$\pi^3$: Parameter Identification - Analysis, Algorithms, Applications"

\bibliography{literature.bib}
\bibliographystyle{icml2019}


\appendix
\onecolumn
\newpage

\section{Additional Lemmas and Proofs}
\label{sec:lemmasAndProofs}
\begin{proof}\textbf{(Theorem \ref{thm:invertibility})}\\
Since ResNet $F_\theta$ is a composition of functions, it is invertible if each block $F^t_{\theta}$ is invertible. Let $x_{t+1} \in \mathbb{R}^d$ be arbitrary and consider the backward Euler discretization $x_t = x_{t+1} - h f_{\theta_t} (x_t) = x_{t+1} - g_{\theta_t} (x_{t})$. Re-writing as a iteration yields
\begin{align}
x_t^0 := x_{t+1} \text{ and } x_t^{k+1} := x_{t+1} - g_{\theta_t}(x_t^{k}),
\end{align}
where $\lim_{k \rightarrow \infty} x_t^k = x_t$ is the fixed point if the iteration converges. As $g_{\theta_t}: \mathbb{R}^d \rightarrow \mathbb{R}^d$ is an operator on a Banach space, the contraction condition $\Lip(g_{\theta_t}) < 1$ guarantees convergence due to the Banach fixed point theorem.
\end{proof}
\begin{remark}
The condition above was also stated in \citet{zhao2019information} (Appendix D), however, their proof restricts the domain of the residual block $g$ to be bounded and applies only to linear operators $g$, because the inverse was given by a convergent Neumann-series.
\end{remark}

\begin{proof}\textbf{(Lemma \ref{thm:lipForwardInverse})}\\
First note, that $\Lip(F) \leq 1 + L$ follows directly from the addition of Lipschitz constants. For the inverse, consider
\begin{align}
\label{eq:reverseTriangle}
    \|F(x) - F(y)\|_2 \nonumber &= \|x-y +g(x) - g(y)\|_2 \\ \nonumber
    &= \|x-y-(-g(x)+g(y))\|_2 \\ 
    &\geq \left| \|x-y\|_2 - \|-g(x)+g(y)\|_2 \right|\\ \nonumber
    & \geq \left| \|x-y\|_2 - \|(-1)\, (g(x) - g(y))\|_2 \right| \\ \nonumber
    &\geq \|x-y\|_2 - \|g(x) - g(y)\|_2 \\ \nonumber
    &\geq \|x-y\|_2 - L\|x-y\|_2, \nonumber
\end{align}
where we apply the reverse triangular inequality in \eqref{eq:reverseTriangle} and apply the Lipschitz constant of $g$.
Denote $x = F^{-1}(z)$ and $y = F^{-1}(w)$ for $z, w \in \mathbb{R}^d$, which is possible since $F^{-1}$ is surjective. Inserting above yields
\begin{align*}
    \|F(F^{-1}(z)) - F(F^{-1}(w))\|_2 &\geq (1-L) \|F^{-1}(z)-F^{-1}(w)\|_2 \\
    \Longleftrightarrow \quad \quad \quad  \frac{1}{1-L}\|z - w\|_2 &\geq \|F^{-1}(z)-F^{-1}(w)\|_2 ,
\end{align*}
which holds for all $z, w$.
\end{proof}

\begin{lemma}[Positive Determinant of Jacobian of Residual Layer]
\label{thm:positiveDet}
Let $F(x) = (I + g(\cdot)) (x)$ denote a residual layer and $J_F(x) = I + J_g(x)$ its Jacobian at $x \in \mathbb{R}^d$. If $\Lip(g) < 1$, then it holds $\lambda_i$ of $J_F(x)$ are positive for all $x$ and thus
\begin{align*}
    | \det[J_F(x)] | = \det[J_F(x)],
\end{align*}
where $\lambda_i$ denotes the eigenvalues.
\end{lemma}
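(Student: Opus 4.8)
The plan is to show that the eigenvalues of the Jacobian $J_F(x) = I + J_g(x)$ all have positive real part, and then use a continuity/homotopy argument to conclude the determinant is positive. The key observation is that the Lipschitz condition $\Lip(g) < 1$ controls the spectral radius of $J_g(x)$.

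\textbf{Step 1: Bound the spectral radius of $J_g$.} The first step is to relate $\Lip(g)$ to the operator norm of the Jacobian. Since $\Lip(g) = L < 1$, the mean value inequality (or the standard fact that the Lipschitz constant equals the supremum of the operator norm of the derivative for continuously differentiable maps) gives $\|J_g(x)\|_2 \leq L < 1$ for all $x$. Because the spectral radius is bounded by any operator norm, every eigenvalue $\mu$ of $J_g(x)$ satisfies $|\mu| \leq L < 1$.

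\textbf{Step 2: Locate the eigenvalues of $J_F$.} Since $J_F(x) = I + J_g(x)$, its eigenvalues are exactly $\lambda_i = 1 + \mu_i$, where $\mu_i$ are the eigenvalues of $J_g(x)$. From $|\mu_i| \leq L < 1$ we get that each $\lambda_i = 1 + \mu_i$ lies in the open disk of radius $L$ centered at $1$ in the complex plane. In particular, $\mathrm{Re}(\lambda_i) \geq 1 - L > 0$, so no eigenvalue is zero and each has strictly positive real part.

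\textbf{Step 3: Conclude positivity of the determinant.} Here is where I expect the only subtlety: eigenvalues may be complex, so I cannot simply multiply real positive numbers. The determinant $\det[J_F(x)] = \prod_i \lambda_i$ is a product of real eigenvalues (each counted with multiplicity) and conjugate pairs of complex eigenvalues. Each conjugate pair $\lambda, \bar\lambda$ contributes $|\lambda|^2 > 0$, and each real eigenvalue satisfies $\lambda \geq 1 - L > 0$ by Step 2. Hence the product is a positive real number, giving $\det[J_F(x)] > 0$ and therefore $|\det[J_F(x)]| = \det[J_F(x)]$.

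\textbf{The main obstacle} will be justifying Step 1 rigorously, namely that $\Lip(g) < 1$ implies $\|J_g(x)\|_2 < 1$ pointwise; this requires $g$ to be differentiable (which holds for the smooth nonlinearities used) and a clean statement of the mean-value bound. An alternative that sidesteps invoking complex eigenvalues altogether is a homotopy argument: define $h(t) = \det(I + t J_g(x))$ for $t \in [0,1]$; since $\|t J_g(x)\|_2 \leq t L < 1$, the matrix $I + t J_g(x)$ is nonsingular for all $t \in [0,1]$, so $h(t) \neq 0$ throughout. As $h$ is continuous with $h(0) = 1 > 0$, the intermediate value theorem forces $h(1) = \det[J_F(x)] > 0$. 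I find this second route cleaner since it avoids case analysis on real versus complex eigenvalues, and I would likely present it as the primary argument.
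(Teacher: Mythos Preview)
Your argument is correct and follows essentially the same route as the paper: bound $\|J_g(x)\|_2<1$ from $\Lip(g)<1$, use $\rho(J_g)\le \|J_g\|_2$ to locate the eigenvalues of $I+J_g$ in the open disk about $1$, and conclude positivity of the determinant. In fact your Step~3 is more careful than the paper's version, which jumps directly from $\mathrm{Re}(\lambda_i(J_F))>0$ to $\det J_F>0$ without invoking the conjugate-pair structure of a real matrix; your observation that complex pairs contribute $|\lambda|^2>0$ and real eigenvalues exceed $1-L$ is exactly the missing justification. The homotopy alternative you sketch ($t\mapsto\det(I+tJ_g)$ is continuous, nonvanishing on $[0,1]$, and equals $1$ at $t=0$) is not in the paper but is a clean and self-contained way to reach the same conclusion.
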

\begin{proof}\textbf{(Lemma \ref{thm:positiveDet})}\\
First, we have $\lambda_i(J_F) = \lambda_i(J_g) + 1$ and $\|J_g(x)\|_2 < 1$ for all $x$ due to $\Lip(g) < 1$. Since the spectral radius $\rho(J_g) \leq \|J_g\|_2$, it is $|\lambda_i(J_g)| < 1$. Hence, $Re(\lambda_i(J_F))>0$ and thus $\det J_F = \prod_i (\lambda_i(J_g) + 1) >0$.
\end{proof}

\begin{lemma}[Lower and Upper Bounds of an invertible ResNet on log-determinant]
\label{lem:lowerUpperBoundDeterminant}
Let $F_\theta: \mathbb{R}^d \rightarrow \mathbb{R}^d$ with $F_\theta = (F^1_{\theta} \circ \ldots \circ F^T_{\theta})$ denote an invertible ResNet with blocks $F^t_{\theta} = I + g_{\theta_t}$. Then, we can obtain the following bounds
\begin{align*}
    d \sum_{t=1}^T \ln (1 - \Lip(g_t)) \leq \ln |\det J_{F}(x) | \\
    d \sum_{t=1}^T \ln (1 + \Lip(g_t)) \geq \ln |\det J_{F}(x) |,
\end{align*}
for all $x \in \mathbb{R}^d$.
\end{lemma}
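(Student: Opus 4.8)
The plan is to reduce the statement for the whole network to a per-block estimate, and then to control the determinant of a single residual layer through the spectrum of its Jacobian. First I would invoke the chain rule: since $F_\theta = F^1_\theta \circ \ldots \circ F^T_\theta$, the Jacobian at $x$ factors as a product $J_F(x) = J_{F^1}(x_1) \cdots J_{F^T}(x_T)$ of the block Jacobians evaluated at the successive intermediate activations $x_t$. Multiplicativity of the determinant then gives $\ln|\det J_F(x)| = \sum_{t=1}^T \ln|\det J_{F^t}(x_t)|$, so it suffices to prove, for each block $F^t = I + g_{\theta_t}$ with $L_t := \Lip(g_{\theta_t}) < 1$, the two-sided bound $d\ln(1 - L_t) \le \ln|\det J_{F^t}| \le d\ln(1+L_t)$, and then sum over $t$.

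For a single block I would work with the spectrum of $J_{g_t}$. As already established in the proof of Lemma \ref{thm:positiveDet}, the operator-norm bound $\|J_{g_t}(x)\|_2 \le \Lip(g_{\theta_t}) = L_t$ holds for all $x$; hence every eigenvalue satisfies $|\lambda_i(J_{g_t})| \le \rho(J_{g_t}) \le \|J_{g_t}\|_2 \le L_t$. The eigenvalues of $J_{F^t} = I + J_{g_t}$ are therefore $1 + \lambda_i(J_{g_t})$, and since $|\det J_{F^t}| = \prod_{i=1}^d |1 + \lambda_i(J_{g_t})|$, I would bound each factor by the triangle inequality, $1 - L_t \le 1 - |\lambda_i| \le |1 + \lambda_i| \le 1 + |\lambda_i| \le 1 + L_t$. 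Taking the product over the $d$ eigenvalues yields $(1-L_t)^d \le |\det J_{F^t}| \le (1+L_t)^d$, and taking logarithms (legitimate since $1 - L_t > 0$) gives the per-block estimate. Summing over $t = 1, \ldots, T$ recovers the claim.

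The points requiring care are (i) justifying $\|J_g\|_2 \le \Lip(g)$, i.e. that the operator norm of the Jacobian is controlled by the Lipschitz constant -- this is the standard differentiable mean-value estimate and is exactly the fact already used in Lemma \ref{thm:positiveDet} -- and (ii) passing from the bound $|\lambda_i| \le L_t$ on the \emph{possibly complex} eigenvalues to the modulus bound $1 - L_t \le |1 + \lambda_i|$, where the reverse triangle inequality is essential because the eigenvalues of $J_{g_t}$ need not be real. I would note that positivity of the determinant (Lemma \ref{thm:positiveDet}) lets one drop the absolute value on $\det J_{F^t}$, though it is not strictly needed here since $|\det J_{F^t}| = \prod_i |\lambda_i(J_{F^t})|$ holds regardless of sign. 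An equivalent and perhaps cleaner route avoids the spectrum entirely: by Weyl's perturbation inequality the singular values of $I + J_{g_t}$ all lie in $[1 - L_t,\, 1 + L_t]$, and $|\det J_{F^t}| = \prod_i \sigma_i$ then produces the same bounds directly.
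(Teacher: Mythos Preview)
Your proposal is correct and follows the same overall architecture as the paper: split $\ln|\det J_F|$ into a sum over blocks via the chain rule, then bound each $|\det J_{F^t}|$ through the spectrum of the block Jacobian. The one substantive difference is in how the per-block bound is obtained. The paper works with \emph{singular values}: it uses $\sigma_d(A)^d \le |\det A| \le \sigma_1(A)^d$ and then invokes Lemma~\ref{thm:lipForwardInverse} (the bi-Lipschitz bounds $\Lip(F^t)\le 1+L_t$ and $\Lip((F^t)^{-1})\le 1/(1-L_t)$) to conclude $\sigma_1(J_{F^t})\le 1+L_t$ and $\sigma_d(J_{F^t})\ge 1-L_t$. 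Your primary route instead bounds the \emph{eigenvalues} directly via $\rho(J_{g_t})\le\|J_{g_t}\|_2\le L_t$ and the (reverse) triangle inequality on $|1+\lambda_i|$, carefully handling the possibility of complex eigenvalues; this is self-contained and avoids appealing to Lemma~\ref{thm:lipForwardInverse}. Your closing remark about Weyl's perturbation inequality for the singular values of $I+J_{g_t}$ is in fact exactly the content of the paper's argument, since Lemma~\ref{thm:lipForwardInverse} is the functional-level statement of that same perturbation bound. Either route is fine; yours is marginally more elementary, the paper's ties back more explicitly to results already established.
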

\begin{proof}\textbf{(Lemma \ref{lem:lowerUpperBoundDeterminant})})\\
First, the sum over the layers is due to the function composition, because $J_F(x) = \prod_t J_{F^t}(x)$ and 
\begin{align*}
    \ln |\det J_F (x)| = \ln \left(\prod_{t=1}^T \det J_{F^t}(x)\right) = \sum_{t=1}^T \ln \det J_{F^t}(x),
\end{align*}
where we use the positivity of the determinant, see Lemma \ref{thm:positiveDet}. Furthermore, note that 
\begin{align*}
\sigma_d(A)^d \leq \prod_i \sigma_i(A) = |\det A| \leq \sigma_1(A)^d    
\end{align*}
 for a matrix $A$ and largest singular values $\sigma_1$ and smallest $\sigma_d$. Furthermore, we have $\sigma_i(J_{F^t}) \leq (1 + \Lip(g_t))$ and $\sigma_d(J_{F^t}) \leq (1 - \Lip(g_t))$, which follows from Theorem \ref{thm:lipForwardInverse}. Inserting this and applying the logarithm rules finally yields the claimed bounds.
\end{proof}

\begin{proof}\textbf{(Theorem \ref{thm:biasLoss})}\\
We begin by noting that
\begin{align}
\label{eq:biasPowerSeries}
    |PS(J_g, n) - \tr \ln(J_g)| &= \left|\sum_{k=n+1}^\infty (-1)^{k+1} \frac{\tr(J_g^k)}{k}\right| \nonumber \\
    &\leq \sum_{k=n+1}^\infty \left|(-1)^{k+1}\frac{\tr(J_g^k)}{k} \right|\nonumber\\
    &\leq \sum_{k=n+1}^\infty \left|\frac{\tr(J_g^k)}{k} \right|\nonumber\\
    &\leq d \sum_{k=n+1}^\infty \frac{\Lip(g)^k}{k},
\end{align}
where inequality \eqref{eq:biasPowerSeries} follows from 
\begin{align}
\label{eq:traceSingEst}
    |\tr(J^k)| &\leq \left|\sum_{i=d}^d \lambda_i(J^k)\right| \leq \sum_{i=d}^d |\lambda_i(J^k)| \leq d \rho(J^k) \nonumber\\
    & \leq d \|J^k\|_2 \leq d \|J\|_2^k \leq d\; \Lip(g)^k.
\end{align}

We note that the full series $\sum_{k=1}^\infty \frac{\Lip(g)^k}{k} = -\ln(1 - \Lip(g))$ thus we can bound the approximation error by
\begin{align*}
    |PS(J_g, n) - \tr\ln(J_g)| \leq -d \left(\ln(1 - \Lip(g)) + \sum_{k=1}^n \frac{\Lip(g)^k}{k}\right)
\end{align*}
\end{proof}

\begin{proof}\textbf{(Theorem \ref{thm:biasGrad})}\\
First, we derive the by differentiating the power series and using the linearity of the trace operator. We obtain
\begin{align*}
\frac{\partial}{\partial \theta_i} \ln \det \big(I + J_g(x,\theta)\big) &= \frac{\partial}{\partial \theta_i} \left(\sum_{k=1}^\infty  (-1)^{k+1}\frac{\tr(J_g^k(x,\theta))}{k}\right)\\
&=  \tr \left(\sum_{k=1}^\infty   \frac{k(-1)^{k+1}}{k} J_g^{k-1}(x,\theta)\frac{\partial (J_g(x,\theta))}{\partial \theta_i} \right)\\
&=  \tr \left(\sum_{k=0}^\infty   (-1)^{k} J_g^{k}(x,\theta)\frac{\partial (J_g(x,\theta))}{\partial \theta_i} \right).
\end{align*}
By definition of $\|\cdot\|_\infty$, 
    \begin{align*}
        \|\nabla_\theta PS\big(J_g(\theta), \infty\big) - \nabla_\theta PS\big(J_g(\theta), n\big)\|_\infty 
        =  \max_{i=1,\ldots, p} \left|\frac{\partial}{\partial \theta_i} PS\big(J_g(\theta), \infty\big) - \frac{\partial}{\partial \theta_i} PS\big(J_g(\theta), n\big)\right|,
    \end{align*}
which is why, we consider an arbitrary $i$ from now on. It is 
\begin{align}
\label{eq:biasGrad}
    \left|\frac{\partial}{\partial \theta_i} PS\big(J_g(\theta), \infty\big) - \frac{\partial}{\partial \theta_i} PS\big(J_g(\theta), n\big)\right|
    =& \sum_{k=n+1}^\infty   (-1)^{k} \tr \left(J_g^{k}(x,\theta)\frac{\partial (J_g(x,\theta))}{\partial \theta_i}\right) \nonumber\\
    \leq & \;d \sum_{k=n+1}^\infty  \Lip(g)^K \left\|\frac{\partial J_g(x, \theta)}{\partial \theta_i}\right\|_2,
\end{align}
where we used the same arguments as in estimation \eqref{eq:traceSingEst}. 

In order to bound $\left\|\frac{\partial J_g(x, \theta)}{\partial \theta_i}\right\|_2$, we need to look into the design of the residual block. We assume contractive and element-wise activation functions (hence $\phi'(\cdot) < 1$) and $N$ linear layers $W_i$ in a residual block. Then, we can write the Jacobian as a matrix product
\begin{align*}
    J_g(x, \theta) = W_N^T D_N \cdots W_1^T D_1,
\end{align*}
where $D_i = diag(\phi'(z_{i-1}))$ with pre-activations $z_{i-1} \in \mathbb{R}^d$.

Since we need to bound the derivative of the Jacobian with respect to weights $\theta_i$, double backpropagation \citep{doubleBackprop} is necessary. In general, the terms $\|W_i^T\|_2$, $\|D_i\|_2$, $\|D^*_i\|_2 := \|diag(\phi''(z_{i-1}))\|_2$, $\left\|\left(\frac{\partial W_i}{\partial \theta_i}\right) \right\|_2$ and $\|x\|_2$ appear in the bound of the derivative. Hence, in order to bound $\left\|\frac{\partial J_g(x, \theta)}{\partial \theta_i}\right\|_2$, we bound the previous terms as follows
\begin{align}
    \|W_i^T\|_2 &\leq \Lip(g) \\
    \|D_i\|_2 &\leq const, \label{eq:contraction}\\
    \|D^*_i\|_2 &\leq const  \label{eq:contraction2} \\
    \|x\|_2 &\leq const \label{eq:boundedInput}\\
    \left\|\left(\frac{\partial W_i}{\partial \theta_i}\right)^T \right\|_2 &\leq  \|W_i\|_F + s  \label{eq:implementation}.
\end{align}
In particular, \eqref{eq:contraction} is due to the assumption of a Lipschitz activation function and \eqref{eq:contraction2} due to assuming a Lipschitz derivative of the activation function. Note, that we are using continuously differentiable activations functions (hence, not ReLU), where this assumptions holds for common functions like ELU, softplus and tanh. Furthermore, \eqref{eq:boundedInput} holds by assuming bounded inputs and due to the network being Lipschitz. To understand the bound \eqref{eq:implementation}, we denote $s$ as the amount of parameter sharing of $\theta_i$. For example, if $\theta_i$ is a entry from a convolution kernel, $s=w*h$ with $w$ spatial width and $h$ spatial height. Then
 \begin{align*}
      \left\|\left(\frac{\partial W_i}{\partial \theta_i}\right)^T \right\|_2 \leq \|W_i\|_F + s,
 \end{align*}
 since 
 \begin{align*}
\frac{\partial W_{lm}(x, \theta)}{\partial \theta_i} =
\begin{cases}
1, \quad \text{if }\; W_{lm}=\theta_i\\
0, \quad \text{else}\\
\end{cases}.
\end{align*}
Hence, as each term appearing in the second derivative $\left\|\frac{\partial J_g(x, \theta)}{\partial \theta_i}\right\|_2$ is bounded, we can introduce the constant $a(g, \theta,x) < \infty$ which depends on the parameters, the implementation of $g$ and the inputs $x$. Note, that we do not give an exact bound on $\left\|\frac{\partial J_g(x, \theta)}{\partial \theta_i}\right\|_2$, since we are only interesting in the existence of such a bound in order to proof the convergence in the claim.

Inserting above in \eqref{eq:biasGrad} and denoting $c:=\Lip(g)$, yields
\begin{align*}
    \left|\frac{\partial}{\partial \theta_i} PS\big(J_g(\theta), \infty\big) - \frac{\partial}{\partial \theta_i} PS\big(J_g(\theta), n\big)\right|
    \leq  \;d\; a(g, \theta, x)  \sum_{k=n+1}^\infty c^K = \tilde{a}(d, g, \theta, x)  \left(\frac{1}{1-c} - \left( \frac{1-c^n}{1-c}+ c^n\right)\right) . 
\end{align*}
Letting $f(n) := \tilde{a}(d, g, \theta, x)   \left(\frac{1}{1-c} - \left( \frac{1-c^n}{1-c}+ c^n\right)\right)$ and $g(n) = c^n$, then 
\begin{align*}
    \lim_{n \rightarrow \infty} \left|\frac{f(n)}{g(n)} \right| = const < \infty,
\end{align*}
which proves the claimed convergence rate.
\end{proof}

\section{Verification of Invertibility}
\paragraph{Inveritibility of Learned Mappings} In this experiment we train standard ResNets and i-ResNets with various layer-wise Lipschitz coefficients ($c \in \{.3, .5, .7, .9\})$. After training, we inspect the learned transformations at each layer by computing the largest singular value of each linear mapping based on the approach in \citet{singularValConv}. It can be seen clearly (Figure \ref{fig:singularvalues} left) that the standard and BatchNorm models have many singular values above 1, making their residual connections non-invertible. Conversely, in the i-ResNet models (Figure \ref{fig:singularvalues} right), all singular values are below 1 (and roughly equal to $c$) indicating their residual connections are invertible. 
\label{sec:invver}
\begin{figure}[H]
    {{\includegraphics[width=0.44\linewidth]{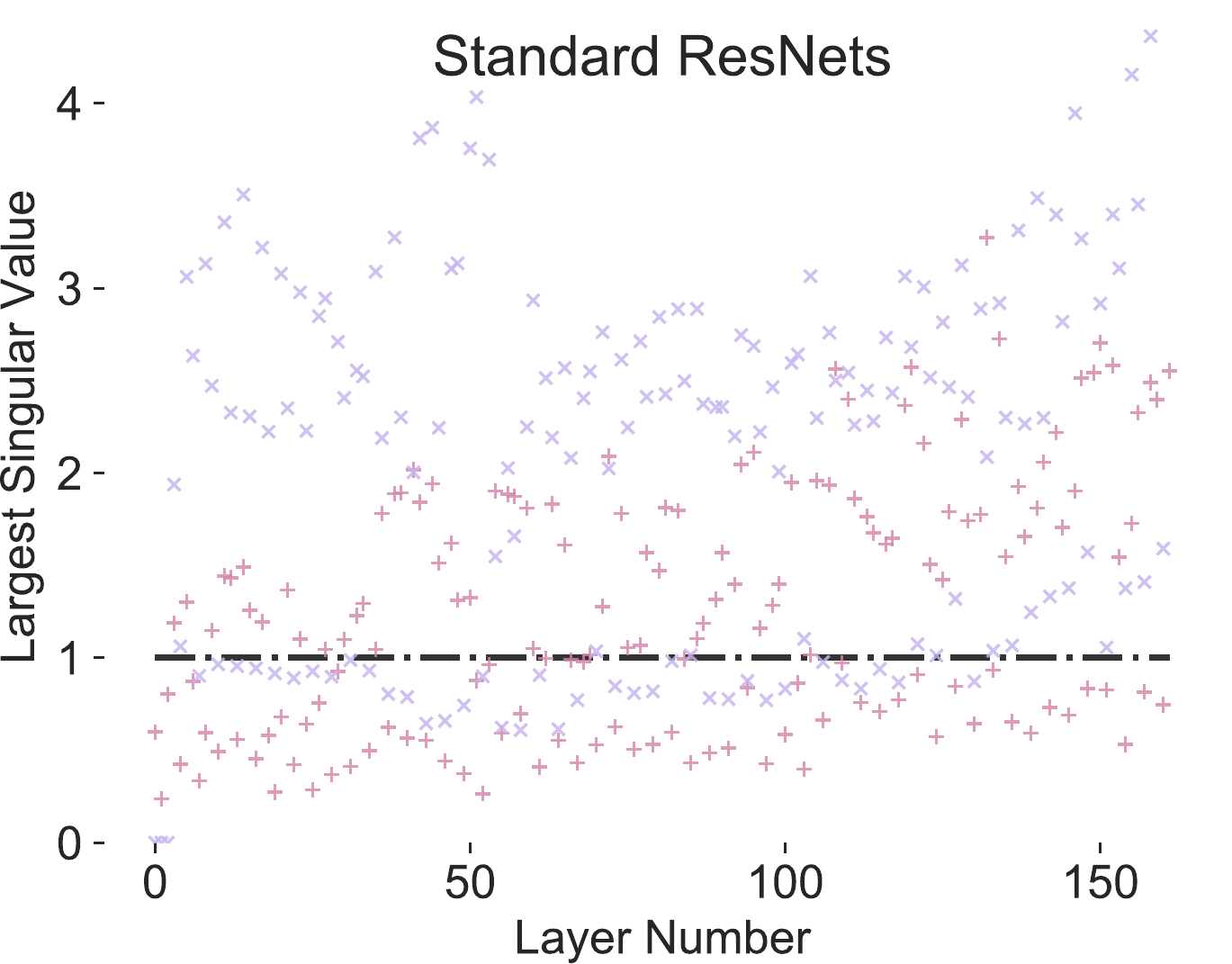} }}%
    \,
    {{\includegraphics[width=0.425\linewidth]{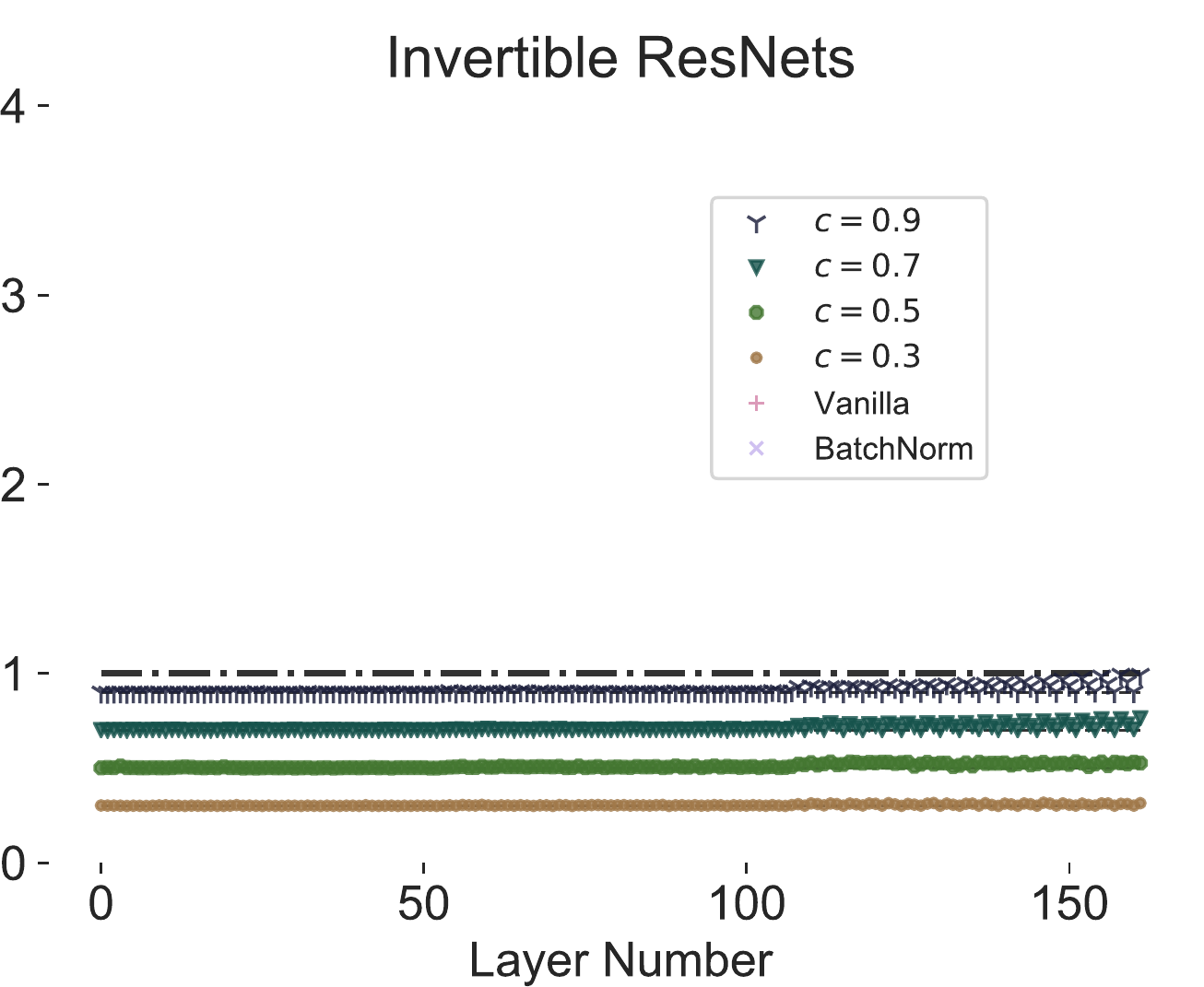} }}%
    \vspace{-2mm}
    \caption{Maximal singular value of each layers convolutional operator for various trained ResNets on Cifar10.  \textit{Left:} Vanilla and Batchnorm ResNet singular values. It is likely that the baseline ResNets are not invertible as roughly two thirds of their layers have singular values fairly above one, making the blocks non-contractive. \textit{Right:} Singular values for our 4 spectrally normalized ResNets. The regularization is effective and in every case the single ResNet block remains a contraction.}%
    \label{fig:singularvalues}
\end{figure}

\paragraph{Computing Inverses with Fixed-Point Iteration} Here we numerically compute inverses in our trained models using the fixed-point iteration, see Algorithm \ref{alg:inverse}. We invert each residual connection using 100 iterations (to ensure convergence). We see that i-ResNets can be inverted using this method whereas with standard ResNets this is not guaranteed (Figure \ref{fig:reconstruction} top). Interestingly, on MNIST we find that standard ResNets are indeed invertible after training on MNIST (Figure \ref{fig:reconstruction} bottom).  

\begin{figure}[H]
\begin{tabular}{R{3.9cm}L{9cm}}
{CIFAR Data:}&
\includegraphics[width=.65\linewidth]{img/dataCifar.jpg}\\
{i-ResNet Reconstructions with $c=0.9$:}&
\includegraphics[width=.65\linewidth]{img/reconsCifarCoeff09}\\
{Vanilla ResNet Reconstructions}&
    \includegraphics[width=.65\linewidth]{img/reconsCifarCoeff1000-resNet164.jpg}\\\midrule
{MNIST Data:}&
\includegraphics[width=.65\linewidth]{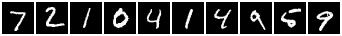}\\
{i-ResNet Reconstructions with $c=0.9$:}&
\includegraphics[width=.65\linewidth]{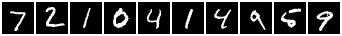} \\
{Vanilla ResNet Reconstructions:}&
\includegraphics[width=.65\linewidth]{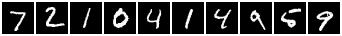}
\end{tabular}
    \caption{Original images (top), i-ResNets with $c=0.9$ (middle) and reconstructions from vanilla (bottom). Surprisingly, MNIST reconstructions are close to exact for both models, even without explicitly enforcing the Lipschitz constant. On CIFAR10 however, reconstructions completely fail for the vanilla ResNet, but are qualitatively and quantitatively exact for our proposed network. }
    \label{fig:reconstruction}
\end{figure}

\section{Experimental Details}

\subsection{Runtime Comparison}
\label{sec:runtime}
\begin{table}[h!]
\begin{center}
\begin{tabular}{S|SSS} \toprule
      {Glow}& {i-ResNet}  & {i-ResNet SN} & {i-ResNet SN LogDet} \\ 
      \midrule
    {0.72 sec} & {0.31 sec}  & {0.57 sec}  &{1.88 sec} 
\end{tabular}
\caption{Timings for a forward and backward pass. Glow is the dimension-splitting baseline, it has the same number of layers and channels as all i-ResNets and batch size is identical. We compare Glow to: 1) plain, 2) spectral norm, 3) spectral norm and log determinant estimated i-ResNets. Discriminative i-ResNets are around 1.2 - 2.1 times faster, while generative i-ResNets are around 2.6 times slower than the dimension splitting baseline. Thus, overall wall-clock times do not differ by much more than a factor of two in all cases.}
\label{tab:runtimeComp}
\end{center}
\end{table}

\subsection{Classification}
\label{sec:class_details}
\paragraph{Architecture} We use pre-activation ResNets with 39 convolutional bottleneck blocks with 3 convolution layers each and kernel sizes of 3x3, 1x1, 3x3 respectively. All models use the ELU nonlinearity~\citep{DBLP:journals/corr/ClevertUH15}. In the BatchNorm version, we apply batch normalization before every nonlinearity and in the invertible models we use ActNorm~\citep{kingma2018glow} before each residual block. The network has 2 down-sampling stages after 13 and 26 blocks where a dimension squeezing operation is used to decrease the spatial resolution. This reduces the spatial dimension by a factor of two in each direction, while increasing the number of channels by a factor of four. All models transform the data to a 8x8x256 tensor to which we apply BatchNorm, a nonlinearity, and average pooling to a 256-dimensional vector. A linear classifier is used on top of this representation. 

\paragraph{Injective Padding} Since our invertible models are not able to increase the dimension of their latent representation, we use injective padding~\citep{jacobsen2018irevnet} which concatenates channels of 0's to the input, increasing the size of the transformed tensor. This is analagous to the standard practice of projecting the data into a higher-dimensional space using a non-ResNet convolution at the input of a model, but this mapping is reversible. We add 13 channels of 0's to all models tested, thus the input to our first residual block is a tensor of size 32x32x16. We experimented with removing this step but found it led to approximately a 2\% decrease in accuracy for our CIFAR10 models. 


\paragraph{Training} We train for 200 epochs with momentum SGD and a weight decay of 5e-4. The learning rate is set to 0.1 and decayed by a factor of 0.2 after 60, 120 and 160 epochs. For data-augmentation, we apply random shifts of upt to two pixels for MNIST and shifts/ random horizontal flips for CIFAR(10/100) during training. The inputs for MNIST are normalized to [-0.5,0.5] and for CIFAR(10/100) normalize by subtracting the mean and dividing by the standard deviation of the training set. 

\subsection{Generative Modeling}
\label{sec:gen_details}
\paragraph{Toy Densities} We used 100 residual blocks, where each residual connection is a multilayer perceptron with state sizes of 2-64-64-64-2 and ELU nonlinearities \citep{DBLP:journals/corr/ClevertUH15}. We used ActNorm \citep{kingma2018glow} after each residual block. The change in log density was computed exactly by constructing the full Jacobian during training and visualization.

\paragraph{MNIST and CIFAR}
The structure of our generative models closely resembles that of Glow. The model consists of ``scale-blocks'' which are groups of i-ResNet blocks that operate at different spatial resolutions. After each scale-block, apart from the last, we perform a squeeze operation which decreases the spatial resolution by 2 in each dimension and multiplies the number of channels by 4 (invertible downsampling). 

Our MNIST and CIFAR10 models have three scale-blocks. Each scale-block has 32 i-ResNet blocks. Each i-ResNet block consists of three convolutions of $3 \cross 3$, $1 \cross 1$, $3 \cross 3$ filters with ELU~\citep{DBLP:journals/corr/ClevertUH15} nonlinearities in between. Each convolutional layer has 32 filters in the MNIST model and 512 filters in the CIFAR10 model.

We train for 200 epochs using the Adamax~\citep{kingma2014adam} optimizer with a learning rate of $.003$. Throughout training we estimate the log-determinant in Equation \eqref{eq:CoV} using the power-series approximation (Equation \eqref{eq:powerSeriesLogDet}) with ten terms for the MNIST model and 5 terms for the CIFAR10 model. 

\paragraph{Evaluation} During evaluation we use the bound presented in Section \ref{sec:bound} to determine the number of terms needed to give an estimate with bias less than .0001 bit/dim. We then average over enough samples from Hutchinson's estimator such that the standard error is less than .0001 bit/dim, thus we can safely report our model's bit/dim accurate up to a tolerance of .0002.

\paragraph{Choice of Nonlinearity} Differentiating our log-determinant estimator requires us to compute second derivatives of our neural network's output. If we were to use a nonlinearity with discontinuous derivatives (i.e. ReLU), then these values are not defined in certain regions. This can lead to unstable optimization. To guarantee the quantities required for optimization always exist, we recommend using nonlinearities which have continuous derivatives such as ELU~\citep{DBLP:journals/corr/ClevertUH15} or softplus. In all of our experiments we use ELU.

\section{Fixed Point Iteration Analysis}
\label{app:iterationtradeoff}

\begin{figure}[H]
    \centering
    {{\includegraphics[width=.4\linewidth]{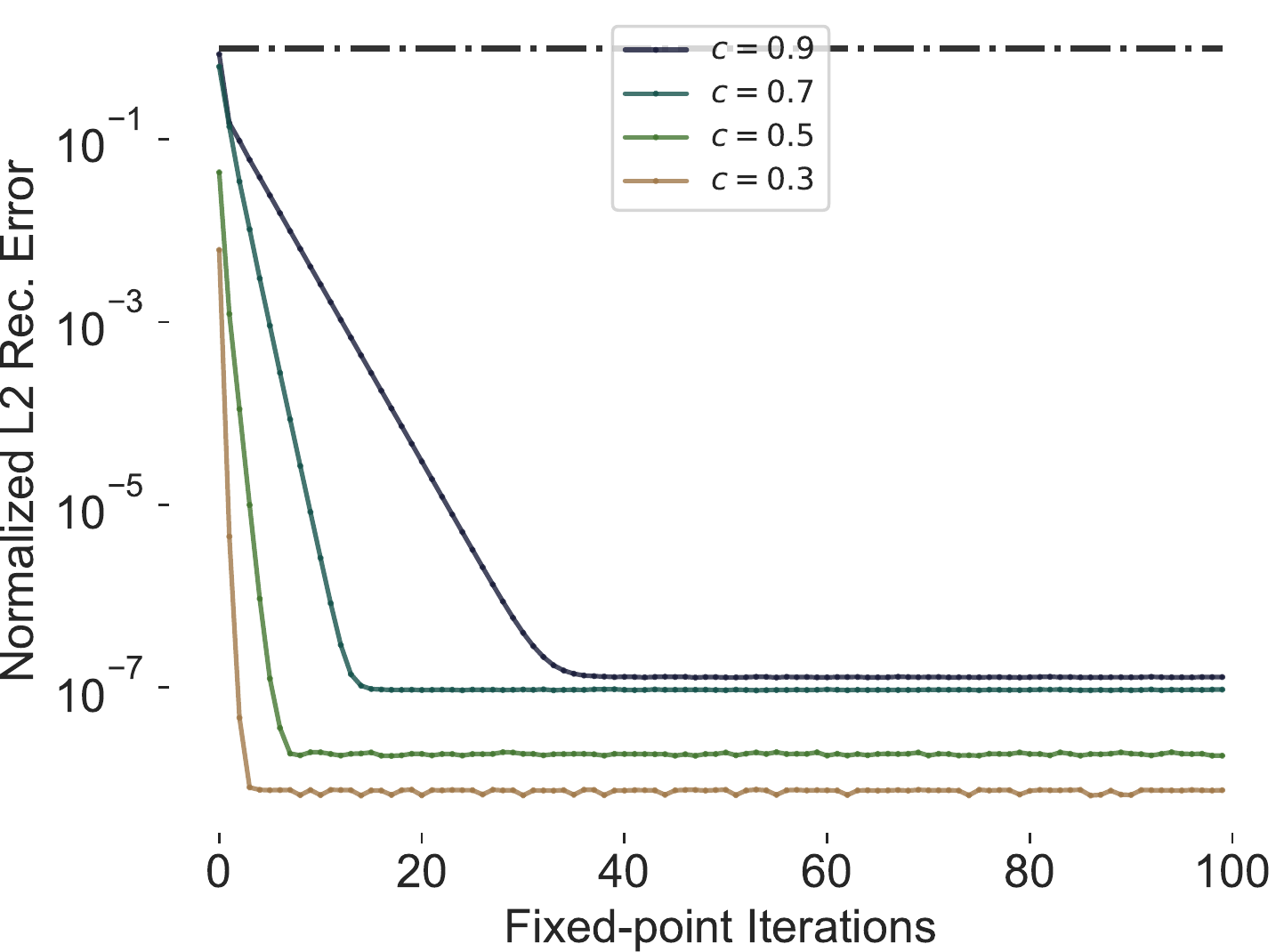} }}%
    \caption{Trade-off between number of fixed point iterations and reconstruction error (log scale) for computing the inverse for different normalization coefficients of trained invertible ResNets (on CIFAR10). The reconstruction error decays quickly. 5-20 iterations are sufficient respectively to obtain visually perfect reconstructions. Note that one iteration corresponds to the time for one forward pass, thus inversion is approximately 5-20 times slower than inference. This corresponds to a reconstruction time of 0.15-0.75 seconds for a batch of 100 CIFAR10 images with 5-20 iterations and 4.3 seconds with 100 iterations.}%
    \label{fig:reconError}%
\end{figure}

\section{Evaluating the Bias of Our Log-determinant Estimator}
\label{sec:numbias}
Here we numerically evaluate the bias of the log-determinant estimator used to train our generative models (Equation \eqref{eq:powerSeriesLogDet}). We compare the true value (computed via brute-force) with the estimator's mean and standard deviation as the number of terms in the power series is increased. After 10 terms, the estimator's bias is negligible and after 20 terms it is numerically 0. This is averaged over 1000 test examples.
\begin{figure}[H]
    \centering
    {{\includegraphics[width=.45\linewidth]{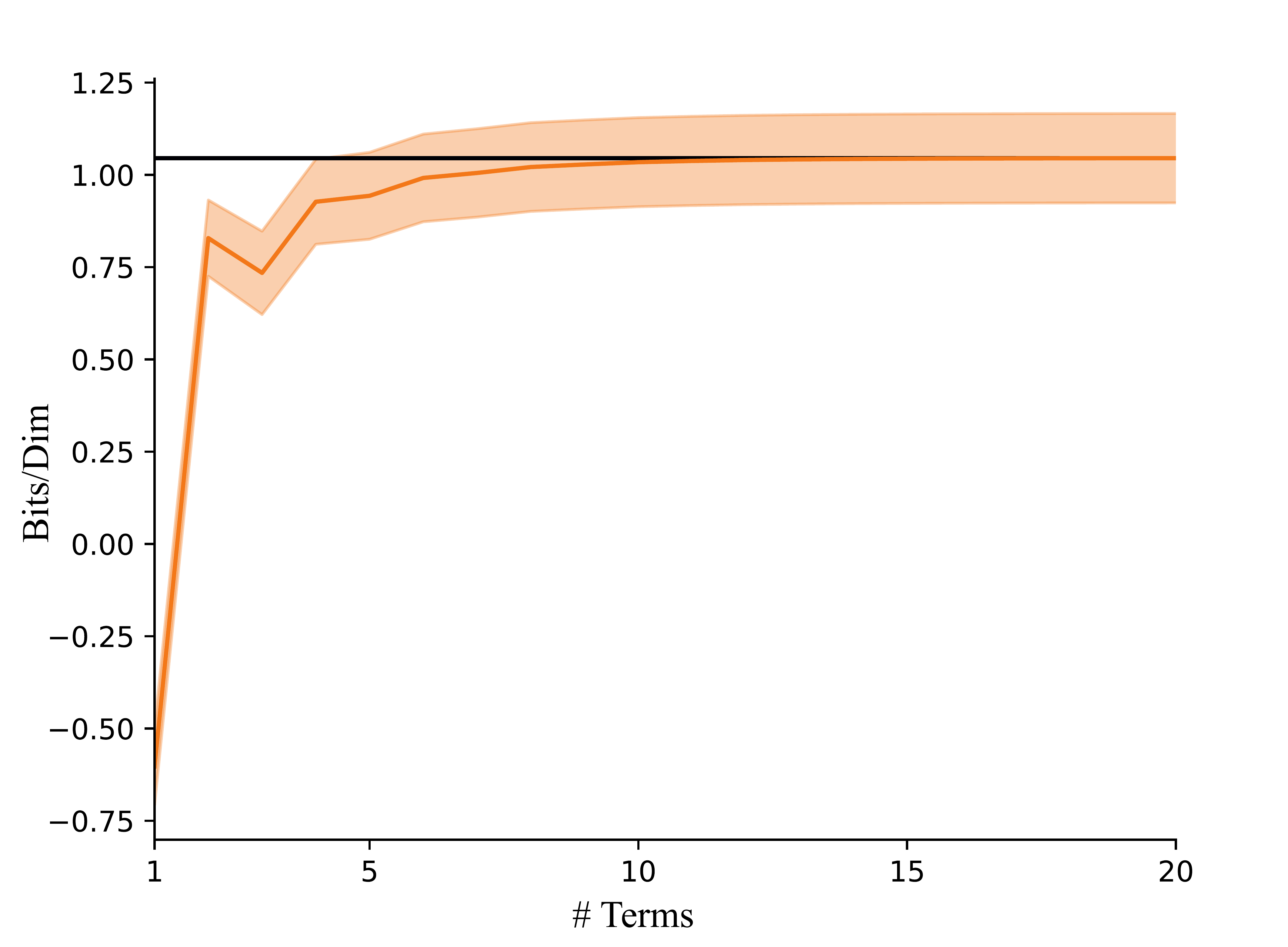} }}%
    \caption{Convergence of approximation error of log-determinant estimator when varying the number of terms used in the power series. The variance is due to the stochastic trace estimator.}%
    \label{fig:biasPlotAppendix}%
\end{figure}

\section{Additional Samples of i-ResNet flow}
\label{sec:mnist_samples}
\begin{figure}[h]
    \begin{subfigure}[b]{0.4\linewidth}
    \centering
    \includegraphics[width=\linewidth]{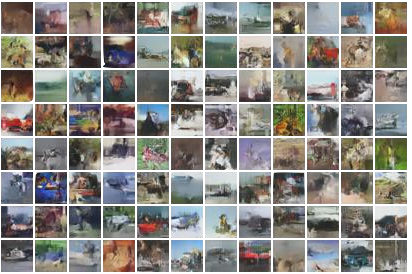}
    \caption*{CIFAR10 samples.}
    \end{subfigure}
    \begin{subfigure}[b]{0.4\linewidth}
    \centering
    \includegraphics[width=\linewidth]{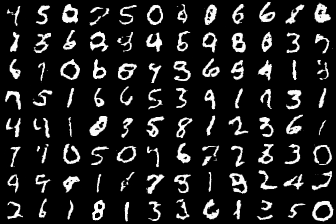}
    \caption*{MNIST samples.}
     \end{subfigure}
\label{fig:my_label}
\end{figure}

\end{document}